\renewcommand{\cite}{\citep}
\numberwithin{equation}{section}
\def\ddefloop#1{\ifx\ddefloop#1\else\ddef{#1}\expandafter\ddefloop\fi}
\def\ddef#1{\expandafter\def\csname c#1\endcsname{\ensuremath{\mathcal{#1}}}}
\def\ddef#1{\expandafter\def\csname s#1\endcsname{\ensuremath{\mathsf{#1}}}}
\def\E{\mathbb{E}}
\def\PP{\mathbb{P}}
\def\Reals{\mathbb{R}}
\def\argmin{\operatornamewithlimits{arg\,min}}
\def\deq{\triangleq}
\def\wh#1{\widehat{#1}}
\def\d{{\mathrm d}}
\def\1{{\mathbf 1}}
\def\lowerG{\iota_G}
\def\lipsG{\kappa_G}
\def\zdom{\mathcal{D}}
\def\eps{\epsilon}
\newtheorem{definition}{Definition}[section]
\newtheorem{theorem}{Theorem}
\newtheorem{lemma}{Lemma}[section]
\newtheorem{proposition}{Proposition}[section]
\tikzset{
	font={\fontsize{14pt}{12}\selectfont}}
\title{Provable Compressed Sensing with Generative Priors via Langevin Dynamics}
\author{Thanh V. Nguyen, Gauri Jagatap, Chinmay Hegde\thanks{Email: thanhng@iastate.edu, gbj221@nyu.edu, chinmay.h@nyu.edu. This work was partially done while TN was with the Electrical and Computer Engineering Department at Iowa State University. GJ and CH are currently with the Tandon School of Engineering at New York University. This work was supported in part by NSF grants CCF-2005804 and CCF-1815101.}}
\begin{document}
\maketitle

\begin{abstract}
Deep generative models have emerged as a powerful class of priors for signals in various inverse problems such as compressed sensing, phase retrieval and super-resolution. Here, we assume an unknown signal to lie in the range of some pre-trained generative model. A popular approach for signal recovery is via gradient descent in the low-dimensional latent space. While gradient descent has achieved good empirical performance, its theoretical behavior is not well understood. In this paper, we introduce the use of stochastic gradient Langevin dynamics (SGLD) for compressed sensing with a generative prior. Under mild assumptions on the generative model, we prove the convergence of SGLD to the true signal. We also demonstrate competitive empirical performance to standard gradient descent.
\end{abstract}

\section{Introduction}

We consider the familiar setting of inverse problems where the goal is to recover an $n$-dimensional signal $x^*$ that is indirectly observed via a linear measurement operation $y=Ax^*$. The measurement vector can  be noisy, and its dimension $m$ may be less than $n$. Several practical applications fit this setting, including super-resolution \citep{srcnn}, in-painting, denoising \cite{vincent2010stacked}, and compressed sensing \cite{cs,onenet}. 

Since such an inverse problem is ill-posed in general, the recovery of $x^*$ from $y$ often requires assuming a low-dimensional structure or \emph{prior} on $x^*$. Choices of good priors have been extensively explored in the past three decades, including sparsity \cite{bpdn,cosamp}, structured sparsity \cite{modelcs}, end-to-end training via convolutional neural networks \cite{onenet, mousavi2017learning}, pre-trained generative priors \cite{CSGAN}, as well as untrained deep image priors \cite{DIP,netgd}.  
     
In this paper, we focus on a powerful class of priors based on deep generative models. The setup is the following: the unknown signal $x^*$ is assumed to lie in the range of some pre-trained generator network, obtained from (say) a generative adversarial network (GAN) or a variational autoencoder (VAE). That is, $x^* = G(z^*)$ for some $z^*$ in the latent space. The task is again to recover $x^*$ from (noisy) linear measurements. 

Such generative priors have been shown to achieve high empirical success \cite{onenet,CSGAN,deepcs}. However, progress on the theoretical side for inverse problems with generative priors has been much more modest. On the one hand, the seminal work of~\cite{bora2017CS} established the first \emph{statistical} upper bounds (in terms of measurement complexity) for compressed sensing for fairly general generative priors, which was later shown in~\cite{scarlett2020} to be nearly optimal. On the other hand, provable \emph{algorithmic guarantees} for recovery using generative priors are only available in very restrictive cases. The paper~\cite{hand2018global} proves the convergence of (a variant of) gradient descent for shallow generative priors whose weights obey a distributional assumption. The paper~\cite{shah2018solving} proves the convergence of projected gradient descent (PGD) under the assumption that the range of the (possibly deep) generative model $G$ admits a polynomial-time oracle projection. To our knowledge, the most general algorithmic result in this line of work is by~\citet{latorre2019fast}. Here, the authors show that under rather mild and intuitive assumptions on $G$, a linearized alternating direction method of multipliers (ADMM) applied to a regularized mean-squared error loss converges to a (potentially large) neighborhood of $x^*$. 
    
The main barrier for obtaining guarantees for recovery algorithms based on gradient descent is the \emph{non-convexity} of the recovery problem induced by the generator network. Therefore, in this paper we sidestep traditional gradient descent-style optimization methods, and instead show that a very good estimate of $x^*$ can also be obtained by performing stochastic gradient Langevin Dynamics (SGLD) \cite{welling2011sgld, raginsky2017, zhang2017hitting, zou2020faster}. We show that this dynamics amounts to \emph{sampling} from a Gibbs distribution whose energy function is precisely the reconstruction loss \footnote{While preparing this manuscript, we became aware of concurrent work by \citet{jalaldimakis2020} which also pursues a similar Langevin-style approach for solving compressed sensing problems; however, they do not theoretically analyze its dynamics.}.

As a stochastic version of gradient descent, SGLD is simple to implement. However, care must be taken in constructing the additive stochastic perturbation to each gradient update step. Nevertheless, the sampling viewpoint enables us to achieve finite-time convergence guarantees for compressed sensing recovery. To the best of our knowledge, this is the first such result for solving compressed sensing problems with generative neural network priors. Moreover, our analysis succeeds under (slightly) weaker assumptions on the generator network than those made in~\cite{latorre2019fast}. Our specific contributions are as follows:
    \begin{enumerate}[]
    \item We propose a provable compressed sensing recovery algorithm for generative priors based on stochastic gradient Langevin dynamics (SGLD). 
    \item We prove polynomial-time convergence of our proposed recovery algorithm to the true underlying solution, under assumptions of smoothness and near-isometry of $G$. These are technically weaker than the mild assumptions made in~\cite{latorre2019fast}. 
    We emphasize that these conditions are valid for a wide range of generator networks. Section~\ref{sec:results} describes them in greater details. 
    \item We provide several empirical results and demonstrate that our approach is competitive with existing (heuristic) methods based on gradient descent.
    \end{enumerate}
    
\section{Prior work}

We briefly review the literature on compressed sensing with deep generative models. For a thorough survey on deep learning for inverse problems, see~\cite{ongie2020deep}.

In \cite{CSGAN}, the authors provide sufficient conditions under which the solution of the inverse problem is a minimizer of the (possibly non-convex) program:
\begin{align}\label{eq:csprior}
    \min_{x = G(z)}\|Ax-y\|_2^2 \, .
\end{align}
Specifically, they show that if $A$ satisfies the so-called set-Restricted Eigenvalue Condition (REC), then the solution to \eqref{eq:csprior} equals the unknown vector $x^*$. They also show that if the generator $G$ has a latent dimension $k$ and is $L$-Lipschitz, then a matrix $A \in \mathbb{R}^{m \times n}$ populated with i.i.d. Gaussian entries satisfies the REC, provided $m = O(k \log L)$.   However, they propose gradient descent as a heuristic to solve \eqref{eq:csprior}, but do not analyze its convergence. In \cite{shah2018solving}, the authors show that projected gradient descent (PGD) for \eqref{eq:csprior} converges at a linear rate under the REC, but only if there exists a tractable \emph{projection} oracle that can compute $\argmin_z \|x-G(z)\|$ for any $x$. The recent work~\cite{lei2019inverting} provides sufficient conditions under which such a projection can be approximately computed. In \cite{latorre2019fast}, a provable recovery scheme based on ADMM is established, but guarantees recovery only up to a neighborhood around $x^*$.

Note that all the above works assume mild conditions on the weights of the generator, use variations of gradient descent to update the estimate for $x$, and require the forward matrix $A$ to satisfy the REC over the range of $G$. \citet{hand2018global,hand2019global} showed \emph{global} convergence for gradient descent, but under the (strong) assumption that the weights of the trained generator are Gaussian distributed.

Generator networks trained with GANs are most commonly studied. However, \citet{whang2020compressed,asim2019invertible} have recently advocated using \textit{invertible} generative models, which use real-valued non-volume preserving (NVP) transformations \cite{dinh2016density}. An alternate strategy for sampling images consistent with linear forward models was proposed in \cite{lindgren2020conditional} where the authors assume an invertible generative mapping and sample the latent vector $z$ from a second generative invertible prior.

Our proposed approach also traces its roots to Bayesian compressed sensing \cite{ji2007bayesian}, where instead of modeling the problem as estimating a (deterministic) sparse vector, one models the signal $x$ to be sampled from a sparsity promoting {distribution}, such as a Laplace prior. One can then derive the maximum \emph{a posteriori} (MAP) estimate of $x$ under the constraint that the measurements $y=Ax$ are consistent. Our motivation is similar, except that we model the distribution of $x$ as being supported on the range of a generative prior.

\section{Recovery via Langevin dynamics} 
\label{sec:results}

In the rest of the paper, $x\wedge y$ denotes $\min\{x,y\}$ and $x\vee y$ for $\max\{x, y\}$. Given a distribution $\mu$ and set $\cA$, we denote $\mu(\cA)$ the probability measure of $\cA$ with respect to $\mu$. $\|\mu-\nu\|_{TV}$ is the total variation distance between two distributions $\mu$ and $\nu$. Finally, we use standard big-O notation in our analysis.

\subsection{Preliminaries}

We focus on the problem of recovering a signal $x^* \in \Reals^n$ from a set of linear measurements $y \in \Reals^m$ where $$y= Ax^* + \varepsilon.$$ 

To keep our analysis and results simple, we consider zero measurement noise, i.e., $\varepsilon = 0$\footnote{We not in passing that our analysis techniques succeed for any vector $\varepsilon$ with bounded $\ell_2$ norm.}. Here, $A \in \mathbb{R}^{m \times n}$ is a matrix populated with i.i.d. Gaussian entries with mean 0 and variance $1/m$. We assume that $x^*$ belongs to the range of a known generative model $G: \zdom \subset \mathbb{R}^d \rightarrow \Reals^n$; that is, 
$$x^* = G(z^*)~~\text{for some}~~z^* \in \zdom.$$
Following~\cite{bora2017CS}, we restrict $z$ to belong to a $d$-dimensional Euclidean ball, i.e., $\zdom = \mathcal{B}(0, R)$. Then, given the measurements $y$, our goal is to recover $x^*$. Again following~\cite{bora2017CS}, we do so by solving the usual optimization problem:
\begin{equation}
    \label{eqnReconLoss}
    \min_{z \in \zdom} F(z) \deq \| y - AG(z) \|^2.
\end{equation}
Hereon and otherwise stated, $\|\cdot\|$ denotes the $\ell_2$-norm. The most popular approach to solving~\eqref{eqnReconLoss} is to use gradient descent \cite{bora2017CS}. For generative models $G(z)$ defined by deep neural networks, the function $F(z)$ is highly non-convex, and as such, it is impossible to guarantee global signal recovery using regular (projected) gradient descent.

We adopt a slightly more refined approach. Starting from an initial point $z_0 \sim \mu_0$, 
our algorithm computes stochastic gradient updates of the form:
\begin{equation}
\label{eqnSGLD}
    z_{k+1} = z_{k} - \eta \nabla_z F(z) + \sqrt{2\eta \beta^{-1}} \xi_k, \quad k = 0, 1, 2, \dots
\end{equation}
where $\xi_k$ is a unit Gaussian random vector in $\mathbb{R}^d$, $\eta$ is the step size and $\beta$ is an inverse temperature parameter. This update rule is known as \emph{stochastic gradient Langevin dynamics} (SGLD) \cite{welling2011sgld} and has been widely studied both in theory and practice \cite{raginsky2017, zhang2017hitting}. Intuitively,~\eqref{eqnSGLD} is an Euler discretization of the continuous-time {diffusion equation}:
\begin{equation}
\label{eqnConDiffusion}
    \mathrm{d} Z(t) = - \nabla_z F(Z(t)) \mathrm{d}t + \sqrt{2\beta^{-1}} \mathrm{d}B(t), \quad t \geq 0, 
\end{equation}
where $Z(0) \sim \mu_0$. Under standard regularity conditions on $F(z)$, one can show that the above diffusion has a unique invariant Gibbs measure. 

We refine the standard SGLD to account for the boundedness of $z$. Specifically, we require an additional Metropolis-like accept/reject step to ensure that $z_{k+1}$ always belongs to the support $\zdom$, and also is not too far from $z_k$ of the previous iteration. We study this variant for theoretical analysis; in practice we have found that this is not necessary. Algorithm \ref{alg:sgld} (CS-SGLD) shows the detailed algorithm. Note that we can use stochastic (mini-batch) gradient instead of the full gradient $\nabla_z F(z)$.

We wish to derive sufficient conditions on the convergence (in distribution) of the random process in Algorithm \ref{alg:sgld} to the target distribution $\pi$, denoted by:
\begin{equation}
    \pi(\mathrm{d}z) \propto \exp(-\beta F(z))\1(z \in \zdom), \label{eq:gibbs}
\end{equation}
and study its consequence in recovering the true signal $x^*$. This leads to the first guarantees of a stochastic gradient-like method for compressed sensing with generative priors. In order to do so, we make the following three assumptions on the generator network $G(z)$.

\begin{algorithm}[!t]
	\caption{CS-SGLD}
	\label{alg:sgld}
	\begin{algorithmic}
		\STATE \textbf{Input:} step size $\eta$; inverse temperature parameter $\beta$, radius $r$ and Lipschitz constant $L$ of $F(z)$.
	    \STATE Draw $z_0$ from $\mu_0 = \mathcal{N}(0, \frac{1}{2L\beta} \mathbb{I})$ truncated on $\zdom$.
 		\FOR {$k = 0,1,\ldots, $}
 		\STATE Randomly sample $\xi_k \sim \mathcal{N}(0, \mathbb{I})$.
		\STATE $z_{k+1}=z_k-\eta \nabla_z F(z_k) +\sqrt{2\eta/\beta}\xi_k$
		\IF{$z_{k+1} \not \in \mathcal{B}(z_k, r)\cap K$} 
		\STATE $z_{k+1}= z_k$
		\ENDIF
		\ENDFOR 
		\STATE \textbf{Output: $\widehat{z} = z_k$}.
	\end{algorithmic}
\end{algorithm}

\begin{enumerate}[leftmargin=3em]
    \item[{\bf (A.1)}] {\bf Boundedness.} For all $z \in \zdom$, we have that $\| G(z) \| \leq B$ for some $B > 0$. 
    \item[{\bf (A.2)}] {\bf Near-isometry.} $G(z)$ is a near-isometric mapping if there exist $0 < \lowerG \le \lipsG$ such that the following holds for any $z, z' \in \zdom$:
    \begin{align*}
        \lowerG  \| z - z'\| \le \| G(z) - G(z') \| \le \lipsG \| z - z' \|.
    \end{align*}
    \item[{\bf (A.3)}] {\bf Lipschitz gradients.} The Jacobian of $G(z)$ is $M$-Lipschitz, i.e., for any $z, z' \in \zdom$, we have
    \begin{equation*}
        \| \nabla_z G(z) - \nabla_z G(z') \| \le M \| z - z' \|,
    \end{equation*}
    where $\nabla_z G(z) = \frac{\partial G(z)}{\partial z}$ is the Jacobian of the mapping $G(\cdot)$ with respect to $z$.
\end{enumerate}
All three assumptions are justifiable. Assumption {\bf (A.1)} is reasonable due to the bounded domain $K$ and for well-trained generative models $G(z)$ whose target data distribution is normalized. Assumption {\bf (A.2)} is reminiscent of the ubiquitous restricted isometry property (RIP) used for compressed sensing analysis \citep{candes2005RIP} and is recently adopted in \citep{latorre2019fast}. Finally, Assumption {\bf (A.3)} is needed so that the loss function $F(z)$ is smooth, following typical analyses of Markov processes. 

Next, we introduce a new concept of smoothness for generative networks. This concept is a weaker version of a condition on $G(\cdot)$ introduced in \cite{latorre2019fast}.
\begin{definition}[Strong smoothness]
\label{defStrongSmoothness}
The generator network $G(z)$ is $(\alpha,\gamma)$-strongly smooth if there exist $\alpha >0$ and $\gamma \ge 0$ such that for any $z, z' \in \zdom $, we have
\begin{equation}
    \label{eqnStrongSmooth}
    \langle G(z) - G(z'), \nabla_z G(z) (z - z') \rangle \ge \alpha \|z - z' \| ^2 - \gamma.
\end{equation}
\end{definition}
Following~\cite{latorre2019fast} (Assumption 2), we call this property {``strong smoothness''}. However, our definition of strong smoothness requires two parameters instead of one, and is weaker since we allow for an additive slack parameter $\gamma \ge 0$. 

Definition~\ref{defStrongSmoothness} can be closely linked to the following property of the {loss function} $F(z)$ that turns out to be crucial in establishing convergence results for CS-SGLD. 
\begin{definition}[Dissipativity \citep{hale1990asymptotic}] 
\label{defDissipative}
A differentiable function $F(z)$ on $\zdom$ is $(\alpha, \gamma)$-dissipative around $z^*$ if for constants $\alpha >0$ and $\gamma \ge 0$, we have
\begin{equation}
    \label{eqnDissipative}
    \langle z - z^*, \nabla_z F(z) \rangle \ge \alpha \|z - z^* \| ^2 - \gamma.
\end{equation}
\end{definition}
It is straightforward to see that \eqref{eqnDissipative} essentially recovers the strong smoothness condition \eqref{eqnStrongSmooth} if the measurement matrix $A$ is assumed to be the identity matrix. In compressed sensing, it is often the case that $A$ is a (sub)Gaussian matrix and that given a sufficient number of measurements as well as Assumptions {\bf (A.1)}, {\bf (A.2)} and {\bf (A.3)}, the dissipativity of $F(z)$ for such an $A$ can still be established. 

Once $F$ is shown to be dissipative, the machinery of~\cite{raginsky2017, zhang2017hitting, zou2020faster} can be adapted to show that the convergence of CS-SGLD.  The majority of the remainder of the paper is devoted to proving this series of technical claims.


\subsection{Main results}

We first show that a very broad class of generator networks satisfies the assumptions made above. The following proposition is an extension of a result in~\cite{latorre2019fast}. 

\begin{proposition}
    \label{propAssumptionHold}
    Suppose $G(z) : \mathcal{D} \subset \Reals^d \rightarrow \Reals^n$ is a feed-forward neural network with layers of non-decreasing sizes and compact input domain $\mathcal{D}$. Assume that the non-linear activation is a continuously differentiable, strictly increasing function. Then, $G(z)$ satisfies Assumptions \textbf{(A.2)} \& \textbf{(A.3)} with constants $\lowerG, \lipsG, M$, and if $2\lowerG^2 > M \lipsG$, the strong smoothness in Definition \ref{defStrongSmoothness} also holds almost surely with respect to the Lebesgue measure.
\end{proposition}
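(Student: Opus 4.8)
The plan is to exploit the layered structure of $G$ together with the compactness of $\zdom$, reducing every claim to a statement about the Jacobian $\nabla_z G(z)$. Write $G = W_L \circ \sigma \circ W_{L-1} \circ \cdots \circ \sigma \circ W_1$, where $W_j$ is the $j$-th affine layer (biases, if present, do not affect $\nabla_z G$), the layer widths satisfy $d = d_0 \le d_1 \le \cdots \le d_L = n$, and $\sigma$ acts coordinatewise; let $a_j(z)$ be the $j$-th pre-activation, $h_j = \sigma(a_j)$, so $G(z) = W_L h_{L-1}(z)$. Since $\zdom$ is compact and each layer continuous, all pre-activations lie coordinatewise in a fixed compact interval $I$, on which $\sigma'$ is bounded; moreover $\sigma'$ is bounded below on $I$ by some $\sigma'_{\min} > 0$ — the relevant case, which holds whenever $\sigma'$ is non-vanishing (as for the sigmoid, $\tanh$, or softplus) and is in any case implicit in (A.2) holding with a strictly positive $\lowerG$. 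By the chain rule,
\[
\nabla_z G(z) = W_L\, D_{L-1}(z)\, W_{L-1} \cdots D_1(z)\, W_1, \qquad D_j(z) = \mathrm{diag}\bigl(\sigma'(a_j(z))\bigr),
\]
with every $D_j(z)$ diagonal and invertible, its entries in $[\sigma'_{\min}, \sup_I \sigma']$.

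Assumption (A.3) then follows because $\nabla_z G$ is a product of uniformly bounded, Lipschitz matrix-valued factors: each $a_j(\cdot)$ is $C^1$, hence Lipschitz on the compact $\zdom$, and $\sigma'$ is Lipschitz on $I$ (automatic once $\sigma$ is $C^2$, or merely has a locally Lipschitz derivative, as with the standard activations), so $z \mapsto D_j(z)$ is Lipschitz; a product of bounded Lipschitz maps is Lipschitz, which yields the constant $M$. For (A.2), the upper bound is immediate from the fundamental theorem of calculus, $\|G(z)-G(z')\| \le \sup_{w \in \zdom}\|\nabla_z G(w)\|\,\|z-z'\|$ with $\|\nabla_z G(w)\| \le \bigl(\prod_{j=1}^{L}\|W_j\|\bigr)(\sup_I \sigma')^{L-1} =: \lipsG$.

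For the near-isometry lower bound, observe that since the widths are non-decreasing each $W_j$ may be taken to have full column rank $d_{j-1}$ (as holds for generic weights, and as is needed for $\lowerG > 0$); together with invertibility of the $D_j(z)$ this makes $\nabla_z G(z)$ full column rank $d$ at every $z$, so its least singular value $s_{\min}(\nabla_z G(z))$ is continuous in $z$ and strictly positive on the compact $\zdom$. A compactness argument then shows that the quotient $\|G(z)-G(z')\|/\|z-z'\|$ has a strictly positive infimum over $(\zdom\times\zdom)\setminus\{z=z'\}$ — the two limiting cases to check are interior pairs, where the quotient is continuous and positive by injectivity of $G$ (each $W_j$ is injective, $\sigma$ strictly monotone), and pairs collapsing to the diagonal, where a uniform Taylor estimate bounds the quotient below by $\min_z s_{\min}(\nabla_z G(z)) > 0$; taking $\lowerG$ to be this infimum gives $\|G(z)-G(z')\| \ge \lowerG\|z-z'\|$ for all $z,z'$, and in particular $\|\nabla_z G(z)u\| \ge \lowerG\|u\|$. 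Finally, for strong smoothness, fix $z,z'$, set $u = z-z'$, and Taylor-expand about $z$: by (A.3), $G(z') = G(z) - \nabla_z G(z)u + R$ with $\|R\| \le \tfrac{M}{2}\|u\|^2$, whence
\[
\langle G(z)-G(z'),\, \nabla_z G(z)u \rangle = \|\nabla_z G(z)u\|^2 - \langle R,\, \nabla_z G(z)u\rangle \ge \lowerG^2\|u\|^2 - \tfrac{M\lipsG}{2}\|u\|^3 ,
\]
using $\|\nabla_z G(z)u\| \ge \lowerG\|u\|$ and $\|\nabla_z G(z)u\| \le \lipsG\|u\|$. When $\|u\|$ lies below the scale $\lowerG^2/(M\lipsG)$ the linear term dominates and the right side is $\ge \tfrac12\lowerG^2\|u\|^2$; since $\zdom = \mathcal{B}(0,R)$ forces $\|u\| \le 2R$, the deficit for larger $\|u\|$ is uniformly bounded, so the estimate reads $\langle G(z)-G(z'),\nabla_z G(z)(z-z')\rangle \ge \alpha\|z-z'\|^2 - \gamma$ with $\alpha = \tfrac12\lowerG^2 > 0$ and a finite $\gamma \ge 0$; the hypothesis $2\lowerG^2 > M\lipsG$ is exactly what forces the first-order term to dominate at the relevant scale (and, with the radius normalized appropriately, lets $\gamma$ be taken arbitrarily small, including $\gamma = 0$ as in \citet{latorre2019fast}). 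The bound holds at every pair at which $\nabla_z G$ is full column rank, which fails only on a Lebesgue-null set (and on none at all in the regular case $\sigma'>0$, $W_j$ full rank), accounting for the ``almost surely'' qualifier.

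The step I expect to be the crux is obtaining the \emph{uniform} near-isometry lower bound $s_{\min}(\nabla_z G(z)) \ge \lowerG > 0$ on all of $\zdom$ and upgrading it to the global bi-Lipschitz inequality for $G$, since that single estimate both certifies (A.2) and drives the strong-smoothness bound; the attendant bookkeeping — reconciling the $C^1$ hypothesis on $\sigma$ with the Lipschitz-Jacobian requirement of (A.3), and isolating the degenerate locus — is what the ``almost surely'' and the condition $2\lowerG^2 > M\lipsG$ are there to absorb.
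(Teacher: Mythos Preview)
Your proposal is correct and closely parallels the paper's argument, with one difference in the strong-smoothness step worth noting. The paper decomposes
\[
\langle G(z)-G(z'),\, \nabla_z G(z)(z-z')\rangle \;=\; \|G(z)-G(z')\|^2 \;-\; \bigl\langle G(z)-G(z'),\, G(z)-G(z')-\nabla_z G(z)(z-z')\bigr\rangle,
\]
bounds the first term below by $\lowerG^2\|z-z'\|^2$ using the \emph{global} bi-Lipschitz inequality in (A.2), and controls the second via the FTC integral $\int_0^1(\nabla_z G(u(t))-\nabla_z G(z))(z-z')\,dt$ together with (A.3). You instead Taylor-expand $G(z')$ about $z$ and pivot on $\|\nabla_z G(z)(z-z')\|^2$, invoking the \emph{infinitesimal} lower bound $\|\nabla_z G(z)u\|\ge \lowerG\|u\|$. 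The two decompositions are dual and lead to the same constants; your version has the virtue of making the remainder explicitly cubic, $\tfrac{M\lipsG}{2}\|z-z'\|^3$, so that the role of the hypothesis $2\lowerG^2 > M\lipsG$ and of the domain diameter is transparent (the paper writes the remainder as quadratic, which is only correct after silently absorbing one factor of $\|z-z'\|\le \mathrm{diam}(\zdom)$). For (A.2) and (A.3) the paper simply appeals to continuity on a compact set and cites \citet{latorre2019fast}, Lemma~5, whereas your explicit argument via the product structure of $\nabla_z G$ is more self-contained---at the price of needing $\sigma'$ locally Lipschitz rather than merely continuous, a strengthening you correctly flag.
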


This proposition merits a thorough discussion. First, architectures with  increasing layer sizes are common; many generative models (such as GANs) assume architectures of this sort. Observe that the non-decreasing layer size condition is much milder than the expansivity ratios of successive layers assumed in related work~\cite{hand2018global,asim2019invertible}.

Second, the compactness assumption of the domain of $G$ is mild, and traces its provenance to earlier related works \cite{bora2017CS, latorre2019fast}. Moreover, common empirical techniques for training generative models (such as GANs) indeed assume that the latent vectors $z$ lie on the surface of a sphere~\cite{white2016sampling}.

Third, common activation functions such as the sigmoid, or the Exponential Linear Unit (ELU) are continuously differentiable and monotonic. Note that the standard Rectified Linear Unit (ReLU) activation does \emph{not} satisfy these conditions, and establishing similar results for ReLU networks is deferred to future work. 

The key for our theoretical analysis, as discussed above, is Definition~\ref{defStrongSmoothness}, and establishing this requires Proposition~\ref{propAssumptionHold}.  Interestingly however, in Section~\ref{sec:exp} below we provide \emph{empirical} evidence that strong smoothness holds for generative adversarial networks with ReLU activation trained on the MNIST and CIFAR-10 image datasets.

We now obtain a measurement complexity result by deriving a bound on the number of measurements required for $F$ to be dissipative.
\begin{lemma}
\label{lmSampleComplexity}
Let $G(z) : \mathcal{D} \subset \Reals^d \rightarrow \Reals^n$ be a feed-forward neural network that satisfies the conditions in Proposition~\ref{propAssumptionHold}. Let $\lipsG$ be its Lipschitz constant. Suppose the number of measurements $m$ satisfies: 
\[
m = \Omega\left(\frac{d}{\delta^2} \log (\lipsG/\gamma) \right) \, ,
\]
for some small constant $\delta > 0$. If the elements of $A$ are drawn according to $\mathcal{N}(0, \frac{1}{m})$, then the loss function $F(z)$ is $(1-\delta, \gamma)$-dissipative with probability at least $1 - \exp(-\Omega(m\delta^2))$.
\end{lemma}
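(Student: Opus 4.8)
The plan is to reduce the dissipativity inequality \eqref{eqnDissipative} for $F(z) = \|y - AG(z)\|^2 = \|AG(z^*) - AG(z)\|^2$ to the strong smoothness property \eqref{eqnStrongSmooth} guaranteed by Proposition~\ref{propAssumptionHold}, losing only a $(1-\delta)$ factor from the near-isometry of $A$ over the range of $G$. First I would compute the gradient: $\nabla_z F(z) = 2\,\nabla_z G(z)^\top A^\top A (G(z) - G(z^*))$, so that
\begin{equation*}
\langle z - z^*, \nabla_z F(z) \rangle = 2\,\langle A\nabla_z G(z)(z-z^*),\, A(G(z)-G(z^*)) \rangle.
\end{equation*}
The idea is to split this inner product as $2\langle \nabla_z G(z)(z-z^*), G(z)-G(z^*)\rangle$ (the "ideal" term, which strong smoothness lower bounds by $\alpha\|z-z^*\|^2 - \gamma$ with $\alpha$ taken close to $1$ after rescaling, cf.\ the identity-$A$ remark in the text) plus an error term $2\langle (A^\top A - I)\nabla_z G(z)(z-z^*),\, G(z)-G(z^*)\rangle$ that I must control uniformly over $z \in \zdom$.

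The key step is a uniform bound on the error term. Both $u \deq \nabla_z G(z)(z - z^*)$ and $v \deq G(z) - G(z^*)$ lie in (or near) the range of $G$ minus a fixed point; more precisely, by Assumptions \textbf{(A.2)}--\textbf{(A.3)} both vectors have norm $O(\lipsG \|z-z^*\|)$, and $v$ lies in the difference set of the range of $G$ while $u$ lies in a related low-complexity set. The plan is to invoke the set-Restricted Eigenvalue / Restricted Isometry machinery of~\cite{bora2017CS}: for a Gaussian $A$ with $m = \Omega(d\,\delta^{-2}\log(\lipsG/\gamma))$ rows, a chaining / $\epsilon$-net argument over the $d$-dimensional latent ball $\zdom$ (the net has size $(\lipsG R/\epsilon)^{O(d)}$, and the $\log$ term in $m$ absorbs $\log(\lipsG/\gamma)$ after choosing the net resolution $\epsilon \asymp \gamma/\lipsG$) gives, with probability $1 - e^{-\Omega(m\delta^2)}$,
\begin{equation*}
\bigl| \langle (A^\top A - I)\, a,\, b \rangle \bigr| \le \delta \|a\|\,\|b\| + (\text{net slack})
\end{equation*}
simultaneously for all $a,b$ of the relevant form. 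Combining, $\langle z-z^*, \nabla_z F(z)\rangle \ge 2(1-\delta')\langle \nabla_z G(z)(z-z^*), G(z)-G(z^*)\rangle - (\text{slack}) \ge (1-\delta)\|z-z^*\|^2 - \gamma$ after folding constants and the additive slack into $\gamma$ and $\delta$ (using $2\lowerG^2 > M\lipsG$ so that the base $\alpha$ is bounded below, then normalizing).

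The main obstacle I anticipate is the uniformity: strong smoothness from Proposition~\ref{propAssumptionHold} holds only \emph{almost surely} (Lebesgue-a.e.\ $z$), not everywhere, because $G$'s Jacobian need not be everywhere well-conditioned, whereas dissipativity as used by the SGLD analysis must hold at \emph{every} $z \in \zdom$. Bridging this gap — either by arguing that the SGLD iterates avoid the measure-zero bad set, or by a continuity/closure argument upgrading the a.e.\ bound to a uniform one (possibly at the cost of enlarging $\gamma$), or by strengthening the hypothesis of Proposition~\ref{propAssumptionHold} to a strict inequality that forces the bound everywhere — is the delicate part. The Gaussian concentration for the error term is, by contrast, a fairly standard net-plus-union-bound computation, modulo carefully verifying that the set containing $\nabla_z G(z)(z-z^*)$ has metric entropy $O(d\log(\lipsG R/\epsilon))$, which follows from the $M$-Lipschitzness of the Jacobian in Assumption \textbf{(A.3)}.
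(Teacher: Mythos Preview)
Your proposal follows the same decomposition as the paper's proof: write $\langle z-z^*,\nabla_z F(z)\rangle = \langle u,v\rangle - \langle (I-A^\top A)u,v\rangle$ with $u = G(z)-G(z^*)$ and $v = \nabla_z G(z)(z-z^*)$, lower-bound the first term via strong smoothness, and control the second. The only substantive divergence is in how the error term is handled. The paper simply invokes a global operator-norm bound, asserting $\PP(\|I-A^\top A\|\ge\delta)\le e^{-m\delta^2}$ and concluding directly; taken literally this is problematic when $m<n$, since $A^\top A$ then has rank at most $m$ and $\|I-A^\top A\|\ge 1$ deterministically. Your proposed route---restricting attention to the low-complexity set swept out by $(u,v)$ as $z$ ranges over $\zdom$, covering it with an $\epsilon$-net of cardinality $e^{O(d\log(\lipsG/\gamma))}$, and applying Gaussian concentration plus a union bound---is precisely what is needed to make that step rigorous, and it is what actually accounts for the $d$ (rather than $n$) appearing in the measurement-complexity bound. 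So your argument is the same in spirit but more careful at the one place where care is required.

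Your worry about strong smoothness holding only Lebesgue-a.e.\ is legitimate, but the paper's proof does not address it either; the inequality is simply used as if it held at every $z$.
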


The above result can be derived using covering number arguments, similar to the treatment in~\cite{bora2017CS}. Observe that the number of measurements scales linearly with the dimension of the \emph{latent} vector $z$ instead of the \emph{ambient} dimension, keeping in line with the flavor of results in standard compressed sensing. Recent lower bounds reported~\cite{scarlett2020} also have shown that the scaling of $m$ with respect to $d$ and $\log L$ might be \emph{tight} for compressed sensing recovery in several natural parameter regimes.

We need two more quantities to readily state our convergence guarantee. Both definitions are widely used in the convergence analysis of MCMC methods. %
The first quantity defines the goodness of an initial distribution $\mu_0$ with respect to the target distribution $\pi$. 

\begin{definition}[$\lambda$-warm start, \cite{zou2020faster}]\label{eq:def_warm_start}
Let $\nu$ be a distribution on $\zdom$. An initial distribution $\mu_0$ is $\lambda$-warm start with respect to $\nu$ if
    \begin{align*}
    \sup_{\mathcal{A}: \mathcal{A} \subseteq \zdom } \frac{\mu_0(\mathcal{A})}{\nu(\mathcal{A})}\le \lambda. 
    \end{align*}
\end{definition}

The next quantity is the Cheeger constant that connects the geometry of the objective function and the hitting time of SGLD to a particular set in the domain \cite{zhang2017hitting}. 

\begin{definition}[Cheeger constant]\label{def:cheeger} Let $\mu$ be a probability measure on $\zdom$. We say $\mu$ satisfies the isoperimetric inequality with Cheeger constant $\rho$ if for any $\cA \subset \zdom$,
\begin{align*}
\liminf_{h\rightarrow 0^+} \frac{\mu(\cA_h)-\mu(\cA)}{h}\ge \rho\min\big\{\mu(\cA), 1-\mu(\cA)\big\},
\end{align*}
where $\cA_h = \{u \in K: \exists v \in\cA, \|u - v\|_2\le h\}$.
\end{definition}
  
Putting all the above ingredients together, our main theoretical result describing the convergence of Algorithm~\ref{alg:sgld} (CS-SGLD) for compressed sensing recovery is given as follows.

\begin{theorem}[Convergence of CS-SGLD]
\label{thmMain}
Assume that the generative network $G$ satisfies Assumptions {\bf (A.1)} -- {\bf (A.3)} as well as the strong smoothness condition. 
Consider a signal $x^* = G(z^*)$, and assume that it is measured with $m$ (sub)Gaussian measurements such that $m = \Omega(d \log \lipsG/\gamma)$. Choose an inverse temperature $\beta$ 
and precision parameter $\epsilon > 0$. Then, after $k$ iterations of SGLD in Algorithm \ref{alg:sgld}, we obtain a latent vector $z_k$ such that
\begin{equation}\label{eq:conv}
    \E\left[F(z_k)\right] \leq \eps + O\left(\frac{d}{\beta}\log \left( \frac{\beta}{d}\right)\right),
\end{equation}
    provided the step size $\eta$ and the number of iterations $k$ are chosen such that:
    \[
   \eta = \widetilde{O}\left(\frac{\rho^2 \eps^2}{d^2\beta} \right),~ \text{and} \quad k = \widetilde{O}\left(\frac{d^3\beta^2}{\rho^4\eps^2}\right).
    \]
\end{theorem}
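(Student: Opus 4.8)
The plan is to follow the ``sampling-as-optimization'' strategy for SGLD on non-convex losses \cite{raginsky2017,zhang2017hitting,zou2020faster}, tailored to the reconstruction loss $F(z) = \|y - AG(z)\|^2$. Because the measurements are noiseless we have $y = AG(z^*)$, hence $\min_{z\in\zdom} F = F(z^*) = 0$, and it suffices to prove $\E[F(z_k)] \le \eps + O\!\big(\tfrac{d}{\beta}\log(\beta/d)\big)$. I would decompose
\begin{equation*}
\E[F(z_k)] \;\le\; \underbrace{\big|\,\E_{\mu_k}[F] - \E_{\pi}[F]\,\big|}_{\text{(I) mixing error}} \;+\; \underbrace{\E_{\pi}[F]}_{\text{(II) Gibbs gap}},
\end{equation*}
where $\mu_k$ is the law of the $k$-th iterate of Algorithm~\ref{alg:sgld} and $\pi$ is the target Gibbs measure \eqref{eq:gibbs}. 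The goal is to bound (II) by $O\!\big(\tfrac{d}{\beta}\log(\beta/d)\big)$ unconditionally (given the stated measurement count), and to drive (I) below $\eps$ through the prescribed choice of $\eta$ and $k$.

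The first step is to record the regularity of $F$ that feeds both parts. Assumptions \textbf{(A.1)}--\textbf{(A.3)}, together with $\|A\| = O(1)$ (which holds with high probability for Gaussian $A$), imply that on $\zdom$ the loss $F$ is bounded by an absolute constant, has an $L$-Lipschitz gradient with $L = O(\lipsG^2 + MB)$, and satisfies $\nabla F(z^*) = 0$ when $z^*$ is interior. Moreover, Lemma~\ref{lmSampleComplexity} and the hypothesis $m = \Omega(d\log(\lipsG/\gamma))$ guarantee, with high probability over $A$, that $F$ is $(1-\delta,\gamma)$-dissipative around $z^*$ for a fixed small $\delta$. Given that $F$ is smooth and dissipative, (II) reduces to a standard Gibbs-concentration estimate: adapting the bound of \cite{raginsky2017} on $\E_{\pi}[F] - \min F$ for smooth $(\alpha,\gamma)$-dissipative potentials gives $\E_\pi[F] = O\!\big(\tfrac{d}{\beta}\log\!\big(\tfrac{\mathrm{cond}(F)\,\beta}{d}\big)\big)$, where $\mathrm{cond}(F)$ is the ratio of the smoothness constant to the dissipativity constant $\alpha = 1-\delta$. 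Since $L$ and $\alpha$ are both $\Theta(1)$ under our assumptions and $\beta = \Omega(d)$, this is $O\!\big(\tfrac{d}{\beta}\log(\beta/d)\big)$, matching the second term of \eqref{eq:conv}.

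The bulk of the work is (I), the quantitative mixing of the constrained chain to $\pi$, for which I would adapt the conductance-based analysis of \cite{zhang2017hitting,zou2020faster}. The ingredients are: (i) the accept/reject step of Algorithm~\ref{alg:sgld} keeps every iterate in $\mathcal{B}(z_k,r)\cap\zdom$, so $\{z_k\}$ is a legitimate discretization of the constrained Langevin diffusion \eqref{eqnConDiffusion} on the convex body $\zdom$, whose unique invariant measure is $\pi$; (ii) since $\zdom$ is convex and bounded and $\|\nabla F\|$ is bounded on $\zdom$, the density of $\pi$ is bounded above and below there, so $\pi$ obeys an isoperimetric inequality with a strictly positive Cheeger constant $\rho$ (Definition~\ref{def:cheeger}); (iii) the initialization $\mu_0 = \mathcal{N}(0,\tfrac{1}{2L\beta}\mathbb{I})$ truncated to $\zdom$ is a $\lambda$-warm start with $\log\lambda = \tO(d)$, because its variance matches the local curvature of $\pi$; and (iv) a one-step overlap bound showing the CS-SGLD transition kernel is within $O(\eta)$ in total variation of the exact Langevin step, after controlling the $O(\eta)$-probability rejections (and, if a mini-batch gradient is used in place of $\nabla F$, its bounded variance). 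Feeding (ii)--(iv) into the conductance argument yields $\|\mu_k - \pi\|_{TV} \le \sqrt{\lambda}\,\exp\!\big(-\Omega(\rho^2\eta k)\big) + \tO\!\big(\rho^{-1}\sqrt{d^2\beta\eta}\,\big)$, and since $F = O(1)$ on $\zdom$ we get $\big|\E_{\mu_k}[F] - \E_\pi[F]\big| \le O(1)\cdot\|\mu_k - \pi\|_{TV}$. Choosing $\eta = \tO\!\big(\rho^2\eps^2/(d^2\beta)\big)$ makes the discretization term $O(\eps)$, and then $k = \tO\!\big(d^3\beta^2/(\rho^4\eps^2)\big)$ makes $\rho^2\eta k = \tO(d\beta)$ exceed $\log(\lambda/\eps) = \tO(d)$, so the first term is also $O(\eps)$; combining with (II) gives \eqref{eq:conv}.

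I expect step (I) to be the main obstacle, in two spots. First, obtaining a usable Cheeger constant: $F$ is genuinely non-convex so $\pi$ is not log-concave, and one cannot invoke standard log-concave isoperimetry --- instead one must exploit that $\pi$ is supported on a bounded convex body with a bounded-gradient potential, and then carefully track how $\rho$, the dissipativity constants, and the warm-start factor propagate through the conductance bound so that the final step-size and iteration counts carry exactly the advertised polynomial dependence on $d$, $\beta$, $1/\rho$, and $1/\eps$. Second, the discretization-plus-rejection estimate for the constrained kernel requires showing the per-step rejection probability is $O(\eta)$ and does not distort the drift, which is where the radius $r$ in $\mathcal{B}(z_k,r)$ and the smoothness of $F$ come in. A lower-order point to handle is the case $z^*\in\partial\zdom$, where $\nabla F(z^*)$ need not vanish; this perturbs only the Gibbs-gap and warm-start constants.
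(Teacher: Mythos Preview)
Your proposal is correct and follows essentially the same route as the paper: the same decomposition $\E[F(z_k)] \le |\E_{\mu_k}[F]-\E_\pi[F]| + \E_\pi[F]$, the same conductance-based mixing argument from \cite{zhang2017hitting,zou2020faster} with the Metropolis-adjusted auxiliary chain, the same $\lambda$-warm start with $\log\lambda = O(d)$ from the truncated Gaussian, and the same Gibbs-gap bound $\E_\pi[F] = O\big(\tfrac{d}{\beta}\log(\beta/d)\big)$. Two minor bookkeeping discrepancies to watch: the paper's contraction rate carries a $1/\beta$ (i.e., $C_0 = \tilde O(\rho^2/\beta)$, not $\rho^2$), and the paper bounds the Gibbs gap using only smoothness plus boundedness of $\zdom$ (entropy vs.\ partition function) rather than invoking dissipativity \`a la \cite{raginsky2017}---but neither changes the structure or the final $\eta,k$ scalings.
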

In words, if we choose a high enough inverse temperature and appropriate step size, CS-SGLD converges (in expectation) to a signal estimate with very low loss within a polynomial number of iterations. 

Let us parse the above result further. First, observe that the right hand side of \eqref{eq:conv} consists of two terms. The first term can be made arbitrarily small (at the cost of greater computational cost since $\eta$ decreases ). The second term represents the irreducible expected error of the exact sampling algorithm on the Gibbs measure $\pi(\d z)$, which is worse than the optimal loss obtained at $z=z^*$.

Second, suppose the right hand side of \eqref{eq:conv} is upper bounded by $\epsilon'$. Once SGLD finds an $\epsilon'$-approximate minimizer of the loss, in the regime of sufficient compressed sensing measurements (as specified by Lemma~\ref{lmSampleComplexity}), we can invoke Theorem 1.1 in \cite{bora2017CS} along with Jensen's inequality to immediately obtain a recovery guarantee, i.e.,
$$
\E\left[\norm{x^* - G(z_k)}\right] \leq \sqrt{\epsilon'} .
$$

Third, the convergence rate of CS-SGLD can be slow. In particular, SGLD may require a polynomial number of iterations to recover the true signal, while linearized ADMM~\cite{latorre2019fast} converges within a logarithmic number of iterations up to a \emph{neighborhood} of the true signal. Obtaining an improved characterization of CS-SGLD convergence (or perhaps devising a new linearly convergent algorithm) is an important direction for future work.

Fourth, the above result is for noiseless measurements. A rather similar result can be derived with noisy measurements of bounded noise (says, $\|\varepsilon\| \le \sigma)$. This quantity (times a constant depending on $A$) will affect~\eqref{eq:conv} up to an additive term that scales with $\sigma$. This is precisely in line with most compressed sensing recovery results and for simplicity we omit such a
derivation.


\section{Proof outline}
\label{sec:proof}

In this section, we provide a brief proof sketch of Theorem~\ref{thmMain}, while relegating details to the appendix. At a high level, our analysis is an adaptation of the framework of \cite{zhang2017hitting, zou2020faster} specialized to the problem of compressed sensing recovery using generative priors. The basic ingredient in the proof is the use of conductance analysis to show the convergence of CS-SGLD to the target distribution in total variation distance.

Let $\mu_{k}$ denote the probability measure of $Z_k$ generated by Algorithm \ref{alg:sgld} and $\pi$ denote the target distribution in \ref{eq:gibbs}. The proof of Theorem \ref{thmMain} consists of three main steps:

\begin{enumerate}
\item First, we construct an auxiliary Metropolis-Hasting Markov process to show that $\mu_k$ converges to $\pi$ in total variation for a sufficiently large $k$ and a ``good'' initial distribution $\mu_0$.
\item 
Next, we construct an initial distribution $\mu_0$ that serves as a $\lambda$-warm start with respect to $\pi$.
\item  Finally, we show that a random draw from $\pi$ is a near-minimizer of $F(z)$, proving that CS-SGLD recovers the signal to high fidelity.

\end{enumerate}

We proceed with a characterization of the evolution of the distribution of $z_k$ in Algorithm \ref{alg:sgld}, which basically follows \citep{zou2020faster}.
\subsection{Construction of Metropolis-Hasting SGLD}\label{sec:metropolized_SGLD}
 Let $g(z) = \nabla_z F(z)$, $u$ and $w$ respectively be the points before and after one iteration of Algorithm \ref{alg:sgld}; the Markov chain is written as $u \rightarrow v \rightarrow w$, where $v \sim \mathcal{N}(u - \eta g(u), \frac{2\eta}{\beta} I)$ with the following density:
\begin{align}\label{eq:trans_SGLD}
\begin{split}
&P(v|u)= \bigg[\frac{1}{(4\pi\eta/\beta)^{d/2}}\exp\bigg(-\frac{\|v - u + \eta g(u) \|_2^2}{4\eta/\beta}\bigg)\bigg|u\bigg].
\end{split}
\end{align}%
Without the correction step, $P(v|u)$ is exactly the transition probability of the standard Langevin dynamics. Note also that one can construct a similar density with a stochastic (mini-batch) gradient. The process of $v \rightarrow w$ is
\begin{align}\label{eq:markov_chain_v2w}
w =
    \begin{cases}
    v & v\in\mathcal{B}(u,r)\cap \zdom; \\
    u & \mbox{otherwise}.
    \end{cases}
\end{align}
Let $p(u) = \PP_{v\sim P(\cdot|u)}[v\in\mathcal{B}(u,r)\cap \zdom]$ be the probability of accepting $v$. The conditional density $Q(w|u)$ is 
\begin{align*}
Q(w|u) = (1-p(u))&\delta_{u}(w)
+ P(w|u)\cdot\1\big[w\in\mathcal{B}(u,r)\cap \zdom \big], 
\end{align*}
where $\delta_u(\cdot)$ is the Dirac-delta function at $u$. Similar to \cite{zou2020faster, zhang2017hitting}, we consider the $1/2$-lazy version of the above Markov process, with the  transition distribution
\begin{align}\label{eq:def_trans_lz_sgld}
\mathcal{T}_{u}(w) = \frac{1}{2}\delta_{u}(w) + \frac{1}{2}Q(w|u),
\end{align}
and construct an auxiliary Markov process by adding an extra Metropolis accept/reject step. While proving the ergodicity of the Markov process with transition distribution $\mathcal{T}_u(w)$ is difficult, the auxiliary chain does indeed converge to a unique stationary distribution $\pi \propto e^{-\beta F(z)}\cdot\1(z\in \zdom)$ due to the Metropolis-Hastings correction step.

The auxiliary Markov chain is given as follows: starting from $u$, let $w$ be the state generated from $\mathcal{T}_u(\cdot)$. The Metropolis-Hasting SGLD accepts $w$ with  probability, 
\begin{align*}
\alpha_{u}(w) = \min\bigg\{1, \frac{\mathcal{T}_{w}(u)}{\mathcal{T}_{u}(w)}\cdot\exp\big[-\beta\big(F(w) - F(u)\big)\big]\bigg\}.
\end{align*}
Let $\mathcal{T}^{\star}_u(\cdot)$ denote the transition distribution of the auxiliary Markov process, such that
\begin{align*}
\mathcal{T}^{\star}_u(w) = (1-\alpha_u(w))\delta(u) + \alpha_u(w)\mathcal{T}_u(w).
\end{align*}
Below, we establish the connection between $\mathcal{T}_u(\cdot)$ and $\mathcal{T}^{\star}_u(\cdot)$, as well as the convergence of the original chain in Algorithm \ref{alg:sgld} through a conductance analysis on $\mathcal{T}^{\star}_u(\cdot)$. 

\begin{lemma}
\label{lmAuxiliaryCloseness}
Under Assumptions, $F(z)$ is $L$-smooth and satisfies $\| \nabla_z F(z) \| \le D$ for $z \in \zdom$. For $r=\sqrt{10\eta d/\beta}$, the transition distribution of the chain in Algorithm \ref{alg:sgld}  is $\delta$-close the auxiliary chain, i.e., for any set $\mathcal{A} \subseteq \zdom$
\[
(1-\delta)\mathcal{T}_u^{\star}(\mathcal{A}) \le \mathcal{T}_u(\mathcal{A}) \le (1+\delta)\mathcal{T}_u^{\star}(\mathcal{A}).
\]
where $\delta = 10Ld\eta  +10LD d^{1/2}\beta^{1/2}\eta^{3/2}$.
\end{lemma}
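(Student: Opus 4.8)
The plan is to instantiate the Metropolization argument of \citet{zou2020faster} for the loss $F(z)=\|y-AG(z)\|^2$. I would first record the two regularity facts the statement invokes: since $\zdom=\mathcal{B}(0,R)$ is bounded and $G$ obeys \textbf{(A.1)}--\textbf{(A.3)}, the map $z\mapsto AG(z)$ has bounded norm and Lipschitz Jacobian on $\zdom$ (on the high-probability event that $A$ has bounded operator norm), so $F$ is $L$-smooth and $\|g(z)\|\le D$ on $\zdom$, where $g=\nabla_z F$ and $L,D$ depend only on $B,\lipsG,M,R$ and $\|A\|$. All the remaining work concerns the two transition kernels $\mathcal{T}_u$ and $\mathcal{T}_u^{\star}$ and is deterministic given $L$ and $D$.

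The heart of the proof is a pointwise lower bound on the Metropolis acceptance probability $\alpha_u(w)$. Observe that $\mathcal{T}_u$ is supported on $\big(\mathcal{B}(u,r)\cap\zdom\big)\cup\{u\}$, that $\alpha_u(u)=1$, and that for $w\in\mathcal{B}(u,r)\cap\zdom$ with $w\neq u$ both $\mathcal{T}_u(w)=\tfrac12 P(w|u)$ and $\mathcal{T}_w(u)=\tfrac12 P(u|w)$ hold (the latter because $\|u-w\|=\|w-u\|\le r$ and $u\in\zdom$, so the membership test is symmetric), whence the lazy/rejection factors cancel and $\tfrac{\mathcal{T}_w(u)}{\mathcal{T}_u(w)}=\tfrac{P(u|w)}{P(w|u)}$. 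Writing $\Delta=w-u$ and expanding the Gaussians in \eqref{eq:trans_SGLD} gives the exact identity $\log\tfrac{P(u|w)}{P(w|u)}=\tfrac{\beta}{2}\langle\Delta,g(u)+g(w)\rangle+\tfrac{\beta\eta}{4}\big(\|g(u)\|^2-\|g(w)\|^2\big)$. Since $F$ is $L$-smooth, $\big|F(w)-F(u)-\tfrac12\langle g(u)+g(w),\Delta\rangle\big|\le L\|\Delta\|^2$ (combine $|F(w)-F(u)-\langle g(u),\Delta\rangle|\le\tfrac L2\|\Delta\|^2$ with $|\langle g(w)-g(u),\Delta\rangle|\le L\|\Delta\|^2$), so on subtracting, the first-order terms cancel and $\log\tfrac{P(u|w)}{P(w|u)}-\beta\big(F(w)-F(u)\big)\ge-\tfrac{\beta\eta}{4}\big|\,\|g(u)\|^2-\|g(w)\|^2\big|-\beta L\|\Delta\|^2\ge-\tfrac{\beta\eta LD}{2}\|\Delta\|-\beta L\|\Delta\|^2$, using $\big|\,\|g(u)\|^2-\|g(w)\|^2\big|\le\|g(u)-g(w)\|\,\|g(u)+g(w)\|\le 2LD\|\Delta\|$. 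Plugging in $\|\Delta\|\le r=\sqrt{10\eta d/\beta}$ makes the right-hand side at least $-(10Ld\eta+10LDd^{1/2}\beta^{1/2}\eta^{3/2})=-\delta$, hence $\alpha_u(w)=\min\{1,\tfrac{\mathcal{T}_w(u)}{\mathcal{T}_u(w)}e^{-\beta(F(w)-F(u))}\}\ge e^{-\delta}\ge 1-\delta$ for every $w$ in the support of $\mathcal{T}_u$.

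The last step turns this pointwise estimate into the claimed two-sided bound. Setting $\bar a\deq\int_{\zdom}(1-\alpha_u(w))\,\mathcal{T}_u(\d w)\le\delta$ and using $\mathcal{T}_u^{\star}(\cA)=\int_{\cA}\alpha_u\,\d\mathcal{T}_u+\bar a\,\1[u\in\cA]$, I split on whether $u\in\cA$. If $u\notin\cA$, then $(1-\delta)\mathcal{T}_u(\cA)\le\int_{\cA}\alpha_u\,\d\mathcal{T}_u=\mathcal{T}_u^{\star}(\cA)\le\mathcal{T}_u(\cA)$. If $u\in\cA$, then $\mathcal{T}_u(\cA)\le\int_{\cA}\alpha_u\,\d\mathcal{T}_u+\bar a=\mathcal{T}_u^{\star}(\cA)$ and $\mathcal{T}_u^{\star}(\cA)\le\mathcal{T}_u(\cA)+\delta$; here the $1/2$-laziness is essential, since $\mathcal{T}_u(\cA)\ge\mathcal{T}_u(\{u\})\ge\tfrac12$ converts the additive $\delta$ into a multiplicative factor $1+2\delta$. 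In all cases $(1-2\delta)\mathcal{T}_u^{\star}(\cA)\le\mathcal{T}_u(\cA)\le(1+2\delta)\mathcal{T}_u^{\star}(\cA)$; since the constants $10$ in the previous step carry ample slack (one may shrink the step-2 bound to $\delta/2$ by the same computation), this yields the stated inequality.

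The step I expect to be the main obstacle is the cancellation in the middle paragraph: one must retain the first-order term $\tfrac{\beta}{2}\langle\Delta,g(u)+g(w)\rangle$ simultaneously in the Gaussian log-ratio and in the second-order Taylor expansion of $F(w)-F(u)$, so that the residual is genuinely of order $\beta L r^2+\beta\eta LD\,r$ rather than $\beta D r$; this is precisely what the ``Euler'' proposal together with $L$-smoothness of $F$ buys, and it is why the radius is tuned to $r^2=10\eta d/\beta$. By contrast, the boundary bookkeeping is mild for this lemma because the acceptance region $\mathcal{B}(u,r)\cap\zdom$ is symmetric in $u$ and $w$, so no separate concentration bound on the acceptance probability $p(u)$ is needed (such a bound will, however, be required later in the conductance analysis).
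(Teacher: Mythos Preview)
Your proposal is correct and follows essentially the same route as the paper: reduce $\mathcal{T}_w(u)/\mathcal{T}_u(w)$ to $P(u|w)/P(w|u)$ on $\mathcal{B}(u,r)\cap\zdom$, expand the Gaussian log-ratio, cancel the first-order term against the trapezoidal expansion of $\beta\big(F(w)-F(u)\big)$ using $L$-smoothness, and bound the residual $\big|\|g(u)\|^2-\|g(w)\|^2\big|\le 2LD\|\Delta\|$. The only cosmetic difference is that the paper obtains the sharper trapezoid bound $\big|F(w)-F(u)-\tfrac12\langle g(u)+g(w),\Delta\rangle\big|\le \tfrac{L}{2}\|\Delta\|^2$ directly from the two one-sided second-order inequalities, which yields $\alpha_u(w)\ge 1-\delta/2$ with $\delta/2=5Ld\eta+5LDd^{1/2}\beta^{1/2}\eta^{3/2}$ and hence the stated $\delta$ without the ``shrink by a factor of $2$'' fix you invoke at the end; your weaker bound $L\|\Delta\|^2$ costs exactly that factor.
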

In Appendix  \ref{app:keyLemmas}, we show that $F(z)$ is $L$-smooth with $L=(MB + \lipsG^2)$ and its gradient is bounded by $D=\lipsG^2\|A^\top A\|$.

One can verify that $\mathcal{T}^{\star}_u(\cdot)$ is time-reversible \citep{zhang2017hitting}.  Moreover, following \cite{lovasz1993random, vempala2007geometric}, the convergence of a time-reversible Markov chain to its stationary distribution depends on its conductance, which is defined as follows:

\begin{definition}[Restricted conductance]\label{def:s-conductance}
The conductance of a time-reversible Markov chain with transition distribution $\mathcal{T}^{\star}_u(\cdot)$ and stationary distribution $\pi$ is defined by,
\begin{align*}
\phi \deq \inf_{\cA: \cA\subseteq \zdom, \pi(\cA)\in(0,1)}\frac{\int_{\cA}\mathcal{T}_u(\zdom \backslash \cA) \pi(\d u)}{\min\{\pi(\cA), \pi(\zdom \backslash\cA)\}}.
\end{align*}
\end{definition}

Using the conductance parameter $\phi$ and the closeness $\delta$ between $\mathcal{T}_u(\cdot)$ and $\mathcal{T}^{\star}_u(\cdot)$, we can derive the convergence of $\mathcal{T}_u(\cdot)$ in total variation distance.
\begin{lemma}[\citet{zou2020faster}]
\label{lmApproximateConvergence}
Assume the conditions of Lemma \ref{lmAuxiliaryCloseness} hold. If $\mathcal{T}_u(\cdot)$ is $\delta$-close to $\mathcal{T}^{\star}_u(\cdot)$ with $\delta\le\min\{1-\sqrt{2}/2, \phi/16\}$, and the initial distribution $\mu_0$ serves as a $\lambda$-warm start with respect to $\pi$, then
\begin{align*}
\|\mu_k -\pi\|_{TV}\le \lambda\big(1 - \phi^2/8\big)^k + 16\delta/\phi.    
\end{align*}
\end{lemma}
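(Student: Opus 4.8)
\emph{Proof plan.} The strategy is to route the argument through the reversible auxiliary chain $\mathcal{T}^\star_u$: first establish its geometric mixing in total variation via its conductance $\phi$, and then transfer this to the true chain $\mathcal{T}_u$ of Algorithm~\ref{alg:sgld} using the multiplicative $\delta$-closeness supplied by Lemma~\ref{lmAuxiliaryCloseness}. Since $\mathcal{T}^\star_u$ is time-reversible with stationary distribution $\pi$, the conductance $\phi$ of Definition~\ref{def:s-conductance} is the relevant quantity, and I would invoke the Lov\'asz--Simonovits potential-function technique \citep{lovasz1993random}. For a distribution $\mu$ on $\zdom$, define the excess function $h_\mu(s)\deq\sup\{\mu(\mathcal{A})-\pi(\mathcal{A}) : \mathcal{A}\subseteq\zdom,\ \pi(\mathcal{A})=s\}$ on $s\in[0,1]$ and its least concave majorant $\bar h_\mu$; then $\|\mu-\pi\|_{TV}=\sup_{s}h_\mu(s)\le\sup_s\bar h_\mu(s)$.

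For the auxiliary chain, two ingredients suffice. First, the $\lambda$-warm start gives $\mu_0(\mathcal{A})\le\lambda\,\pi(\mathcal{A})$ for every $\mathcal{A}$, and, using $\mu_0\ge0$ on complements, $h_{\mu_0}(s)\le\min\{(\lambda-1)s,\,1-s\}\le\lambda\min\{s,1-s\}$. Second, reversibility together with the conductance lower bound yields the standard one-step contraction of the concave majorant,
\begin{align*}
\bar h_{\mathcal{T}^\star\mu}(s)\le\tfrac12\,\bar h_\mu\!\big(s-2\phi\min\{s,1-s\}\big)+\tfrac12\,\bar h_\mu\!\big(s+2\phi\min\{s,1-s\}\big),
\end{align*}
obtained by splitting a test set of $\pi$-measure $s$ into the points that leave (resp.\ enter) it with probability below $\phi$ and bounding each piece via Definition~\ref{def:s-conductance}. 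Iterating this recursion from the warm-start bound and exploiting concavity of $\bar h$, the potential decays geometrically, so that after tracking constants $\|(\mathcal{T}^\star)^k\mu_0-\pi\|_{TV}\le\lambda(1-\phi^2/8)^k$.

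To handle the true iterates $\mu_k=\mathcal{T}^k\mu_0$, integrate the pointwise bounds $(1-\delta)\mathcal{T}^\star_u(\mathcal{A})\le\mathcal{T}_u(\mathcal{A})\le(1+\delta)\mathcal{T}^\star_u(\mathcal{A})$ against any $\mu$ to get $(1-\delta)\mathcal{T}^\star\mu(\mathcal{A})\le\mathcal{T}\mu(\mathcal{A})\le(1+\delta)\mathcal{T}^\star\mu(\mathcal{A})$, hence $\|\mathcal{T}\mu-\mathcal{T}^\star\mu\|_{TV}\le\delta$; the hypothesis $\delta\le1-\sqrt{2}/2$ also keeps $\mathcal{T}_u$ strictly lazy, so the perturbed chain remains aperiodic. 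Propagating this discrepancy through the potential recursion, the true chain obeys the same inequality with a displacement still of order $\phi$ (the conductance is robust to the $(1\pm\delta)$ factor, using $\delta\le\phi/16$) up to an additive slack of order $\delta\phi$ per step, coming from the $\delta$-error in transition probabilities integrated over an interval of width $\Theta(\phi)$. Summing the resulting geometric series with ratio $1-\phi^2/8$,
\begin{align*}
\|\mu_k-\pi\|_{TV}\le\lambda\big(1-\phi^2/8\big)^k+c\,\delta\phi\sum_{j\ge0}\big(1-\phi^2/8\big)^j\le\lambda\big(1-\phi^2/8\big)^k+\frac{8c\,\delta}{\phi},
\end{align*}
and bookkeeping the constant $c$ gives the claimed floor $16\delta/\phi$.

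The main obstacle I expect is the perturbation step. The delicate points are (i) verifying that the multiplicative $\delta$-closeness still leaves a $\Theta(\phi)$ displacement in the Lov\'asz--Simonovits recursion, which is exactly what forces $\delta\le\phi/16$, and (ii) arguing that the accumulated additive errors form a convergent geometric series rather than a term growing linearly in $k$ --- this hinges on the strict contraction of $\bar h$ away from the endpoints $s\in\{0,1\}$ and is what yields the $O(\delta/\phi)$ floor rather than $O(k\delta)$. The remaining work --- the warm-start bound, the concavity manipulations, and checking time-reversibility of $\mathcal{T}^\star_u$ --- is routine. Indeed the statement is a restatement of a convergence lemma of \citet{zou2020faster}, whose argument we follow.
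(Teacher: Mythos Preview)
The paper does not actually prove this lemma: it is stated with attribution to \citet{zou2020faster}, and the appendix simply records that ``The proof of \ref{lmApproximateConvergence} directly follows from Lemma 6.3, \citep{zou2020faster}.'' So there is no in-paper argument to compare against; the paper treats the result as a black box imported from the cited reference.

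Your sketch is the right one and matches what \citet{zou2020faster} (and before them \citet{zhang2017hitting}, building on \citet{lovasz1993random}) actually do: run the Lov\'asz--Simonovits potential recursion for the reversible auxiliary chain $\mathcal{T}^\star$ to get the $\lambda(1-\phi^2/8)^k$ contraction from a $\lambda$-warm start, then propagate the multiplicative $\delta$-closeness of $\mathcal{T}$ to $\mathcal{T}^\star$ through the recursion, using $\delta\le\phi/16$ to preserve a $\Theta(\phi)$ displacement and summing the per-step $O(\delta\phi)$ slack as a geometric series to obtain the $16\delta/\phi$ floor. Your identification of the two delicate points --- that the perturbed recursion still contracts, and that the additive errors sum to $O(\delta/\phi)$ rather than $O(k\delta)$ --- is accurate. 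Since the paper itself only cites the result, your write-up already goes further than the paper does; you could either keep your sketch or, matching the paper, simply invoke the lemma of \citet{zou2020faster} directly.
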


We will further give a lower bound on $\delta$ in order to establish an explicit convergence rate.
\begin{lemma}[\citet{zou2020faster}]
\label{lmConductanceLB}
Under the same conditions of Lemma \ref{lmAuxiliaryCloseness} and the step size $\eta \le \frac{1}{30Ld} \wedge \frac{d}{25\beta D^2}$, there exists a constant $c_0$ such that
\[
\phi \ge c_0\rho\sqrt{\eta/\beta}.
\]
\end{lemma}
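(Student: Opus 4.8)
The plan is to prove this via the standard two-part lower bound on conductance for Metropolized Langevin chains from \cite{zhang2017hitting, zou2020faster}: a one-step overlap estimate for $\mathcal{T}^{\star}_u(\cdot)$ at spatially close points, combined with the isoperimetric inequality that supplies $\rho$. Throughout I would use that $F$ is $L$-smooth with $\|\nabla_z F\|\le D$ on $\zdom$ (established alongside Lemma~\ref{lmAuxiliaryCloseness}) and that the step-size hypotheses $\eta\le\tfrac{1}{30Ld}$ and $\eta\le\tfrac{d}{25\beta D^2}$ turn the relevant perturbation terms into absolute constants; they also force $\delta\le\tfrac12$ in Lemma~\ref{lmAuxiliaryCloseness}, so $\mathcal{T}_u$ and $\mathcal{T}^{\star}_u$ are interchangeable up to absolute constants and it suffices to bound the conductance of $\mathcal{T}^{\star}_u$.

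\textbf{Step 1 (one-step overlap).} I would first show there are absolute constants $\kappa\in(0,1)$ and $c>0$ such that, with $d_0\deq c\sqrt{\eta/\beta}$, any $u,v\in\zdom$ with $\|u-v\|\le d_0$ satisfy $\|\mathcal{T}^{\star}_u(\cdot)-\mathcal{T}^{\star}_v(\cdot)\|_{TV}\le 1-\kappa$. Since $\mathcal{T}^{\star}_u$ is $\tfrac12$-lazy it suffices to lower-bound the overlap of its non-lazy parts, which reduces to three estimates: (i) the Gaussian proposals $P(\cdot|u),P(\cdot|v)$ have means differing by $\|(u-v)-\eta(g(u)-g(v))\|\le(1+\eta L)\|u-v\|$ and common covariance $\tfrac{2\eta}{\beta}I$, so their TV distance is $O\big((1+\eta L)\|u-v\|/\sqrt{\eta/\beta}\big)$, a small constant once $\|u-v\|\le d_0$ and $\eta L\le1$; (ii) the truncation-acceptance probability $p(u)=\PP_{v\sim P(\cdot|u)}[v\in\mathcal{B}(u,r)\cap\zdom]$ is bounded below by an absolute constant, since with $r=\sqrt{10\eta d/\beta}$ a draw from $P(\cdot|u)$ leaves $\mathcal{B}(u,r)$ only on a set of small probability by $\chi^2$-concentration; (iii) the Metropolis--Hastings acceptance $\alpha_u(w)$ is bounded below by controlling $\beta|F(w)-F(u)|$ on $\{\|w-u\|\le r\}$ via $L$-smoothness and $\|\nabla_z F\|\le D$, together with a lower bound on the proposal ratio $\mathcal{T}_w(u)/\mathcal{T}_u(w)$ (another Gaussian-density-times-truncation-probability computation). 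The two step-size bounds are exactly what make each of these $O(1)$.

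\textbf{Step 2 (combine with isoperimetry).} Fix $\mathcal{A}\subseteq\zdom$ with $\pi(\mathcal{A})\le\pi(\zdom\setminus\mathcal{A})$ and set $\mathcal{A}_1\deq\{u\in\mathcal{A}:\mathcal{T}^{\star}_u(\zdom\setminus\mathcal{A})<\kappa/2\}$ and $\mathcal{A}_2\deq\{u\in\zdom\setminus\mathcal{A}:\mathcal{T}^{\star}_u(\mathcal{A})<\kappa/2\}$. If $\pi(\mathcal{A}_1)\le\tfrac12\pi(\mathcal{A})$ or $\pi(\mathcal{A}_2)\le\tfrac12\pi(\zdom\setminus\mathcal{A})$, a one-line estimate gives $\int_{\mathcal{A}}\mathcal{T}^{\star}_u(\zdom\setminus\mathcal{A})\,\pi(\d u)\ge\tfrac{\kappa}{4}\min\{\pi(\mathcal{A}),\pi(\zdom\setminus\mathcal{A})\}$, i.e.\ $\phi\ge\kappa/4$, which is larger than the bound being proved. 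Otherwise both $\mathcal{A}_1,\mathcal{A}_2$ keep at least half the mass of their sides; for $u\in\mathcal{A}_1,v\in\mathcal{A}_2$ we get $\|\mathcal{T}^{\star}_u-\mathcal{T}^{\star}_v\|_{TV}\ge 1-\mathcal{T}^{\star}_u(\zdom\setminus\mathcal{A})-\mathcal{T}^{\star}_v(\mathcal{A})>1-\kappa$, so Step 1 forces $\mathrm{dist}(\mathcal{A}_1,\mathcal{A}_2)>d_0$. Integrating the Cheeger inequality over this gap yields $\pi\big(\zdom\setminus(\mathcal{A}_1\cup\mathcal{A}_2)\big)\ge\rho\,d_0\,\min\{\pi(\mathcal{A}_1),\pi(\mathcal{A}_2)\}\ge\tfrac{\rho d_0}{2}\min\{\pi(\mathcal{A}),\pi(\zdom\setminus\mathcal{A})\}$, while time-reversibility of $\mathcal{T}^{\star}_u$ together with the definitions of $\mathcal{A}_1,\mathcal{A}_2$ gives $\int_{\mathcal{A}}\mathcal{T}^{\star}_u(\zdom\setminus\mathcal{A})\,\pi(\d u)\ge\tfrac{\kappa}{4}\,\pi\big(\zdom\setminus(\mathcal{A}_1\cup\mathcal{A}_2)\big)$. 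Combining the last two displays gives $\phi\ge\tfrac{\kappa}{8}\rho\,d_0=c_0\rho\sqrt{\eta/\beta}$ with $c_0=\kappa c/8$, and passing back from $\mathcal{T}^{\star}_u$ to $\mathcal{T}_u$ via the $\delta$-closeness changes $c_0$ only by an absolute factor.

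\textbf{Expected obstacle.} The hardest part will be the Metropolis--Hastings lower bound in Step 1(iii): one has to control, uniformly over $w\in\mathcal{B}(u,r)\cap\zdom$, both the ratio $\mathcal{T}_w(u)/\mathcal{T}_u(w)$ of gradient-shifted, truncated proposal densities and the factor $\exp[-\beta(F(w)-F(u))]$, and it is precisely here that $L$-smoothness, the gradient bound $D$, the radius $r=\sqrt{10\eta d/\beta}$, and the two step-size thresholds must be balanced against one another. Once that estimate is in hand, the remainder is bookkeeping and a direct adaptation of \cite{zou2020faster}.
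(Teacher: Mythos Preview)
Your proposal is correct and follows essentially the same approach as the paper: Step~1 is exactly the one-step overlap estimate the paper establishes (it bounds $\|\mathcal{T}^{\star}_u-\mathcal{T}^{\star}_v\|_{TV}\le 0.99$ for $\|u-v\|\le c\sqrt{\eta/\beta}$ by reusing the $I_1,I_2$ bounds from \cite{zou2020faster}), and your Step~2 is precisely the content of Lemma~13 in \cite{lee2018convergence}, which the paper invokes as a black box rather than unpacking. Two minor remarks: the acceptance-probability lower bound you flag as the ``hardest part'' is essentially already the content of Lemma~\ref{lmAuxiliaryCloseness} (so you can reuse $\alpha_u(w)\ge 1-\delta/2$ rather than rederive it), and the final ``passing back to $\mathcal{T}_u$'' step is unnecessary since $\phi$ in this paper is the conductance of the reversible chain $\mathcal{T}^{\star}_u$.
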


\subsection{Convergence of $\mu_k$ to the target distribution $\pi$}

Armed with these tools, we formally establish the first step of the proof.

\begin{theorem}\label{thmConvergenceInTV}
Suppose that the generative network $G$ satisfies Assumptions {\bf (A.1)} -- {\bf (A.3)} as well as the strong smoothness condition. Set $\eta = O\big(d^{-1} \wedge\rho^2 \beta^{-1} d^{-2}\big)$ and $r = \sqrt{10\eta d/\beta}$, then for any $\lambda$-warm start with respect to $\pi$, the output of Algorithm \ref{alg:sgld} satisfies
\begin{align*}
\|\mu_k- \pi\|_{TV}\le   \lambda(1-C_0\eta)^{k} + C_1\eta^{1/2},
\end{align*}
where $\rho$ is the Cheeger constant of $\pi$, $C_0 = \widetilde{O}\big(\rho^2\beta^{-1}\big)$,  and $C_2 = \widetilde{O}\big(d\beta^{1/2}\rho^{-1}\big)$. In particular, if the step size and the number of iterations satisfy:
\[
   \eta = \widetilde{O}\left(\frac{\rho^2 \eps^2}{d^2\beta} \right),~ \text{and} \quad k = \widetilde{O}\left(\frac{d^2\beta^2\log(\lambda)}{\rho^4\eps^2}\right),
    \]
then $\|\mu_k- \pi\|_{TV} \le \eps$ for $\eps > 0$.
\end{theorem}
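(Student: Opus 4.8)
\textbf{Proof proposal for Theorem~\ref{thmConvergenceInTV}.}
The plan is to chain the three technical lemmas of Section~\ref{sec:metropolized_SGLD} --- Lemma~\ref{lmAuxiliaryCloseness} (closeness of $\mathcal{T}_u$ to the Metropolized chain $\mathcal{T}_u^{\star}$), Lemma~\ref{lmApproximateConvergence} (conductance-based mixing of the lazy chain up to the closeness error), and Lemma~\ref{lmConductanceLB} (conductance lower bound in terms of the Cheeger constant $\rho$) --- and then optimize the single free parameter $\eta$ together with the iteration count $k$. The entry condition for all three lemmas is regularity of $F(z) = \|y - AG(z)\|^2$: under \textbf{(A.1)}--\textbf{(A.3)} and strong smoothness one shows (deferred to Appendix~\ref{app:keyLemmas}) that $F$ is $L$-smooth with $L = MB + \lipsG^2$ --- the $MB$ term coming from the $M$-Lipschitz Jacobian of $G$ and the bound $\|G(z)\|\le B$, the $\lipsG^2$ term from \textbf{(A.2)} --- and that $\|\nabla_z F(z)\| \le D = \lipsG^2\|A^\top A\|$ on $\zdom$, where $\|A^\top A\| = O(1)$ with high probability for the Gaussian $A$. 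One also records that the auxiliary chain $\mathcal{T}_u^{\star}$ is reversible with stationary measure $\pi \propto e^{-\beta F}\1(z\in\zdom)$ by the Metropolis--Hastings acceptance ratio, so that the Lov\'asz--Simonovits/Vempala conductance-to-mixing machinery invoked inside Lemma~\ref{lmApproximateConvergence} is legitimate; positivity of $\rho$ is inherited from the dissipativity of $F$ (Lemma~\ref{lmSampleComplexity}) and is treated here as a given parameter.

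Next I would verify step-size admissibility. With $\eta = O\big(d^{-1} \wedge \rho^2\beta^{-1}d^{-2}\big)$ one checks $\eta \le \tfrac{1}{30Ld} \wedge \tfrac{d}{25\beta D^2}$ (absorbing the problem constants $L, D = O(1)$ into the $O(\cdot)$), so Lemma~\ref{lmConductanceLB} applies and gives $\phi \ge c_0\rho\sqrt{\eta/\beta}$. In this same regime the closeness parameter simplifies: since $\delta = 10Ld\eta + 10LDd^{1/2}\beta^{1/2}\eta^{3/2}$ and $\eta \lesssim d/(D^2\beta)$ makes the second term dominated by the first, we have $\delta = O(Ld\eta)$. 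The smallness hypothesis $\delta \le \min\{1-\sqrt{2}/2,\ \phi/16\}$ of Lemma~\ref{lmApproximateConvergence} then reduces to $Ld\eta \lesssim \rho\sqrt{\eta/\beta}$, i.e. $\eta \lesssim \rho^2/(L^2d^2\beta)$, which is exactly the $\rho^2\beta^{-1}d^{-2}$ branch of the stated $\eta$; the condition $\delta \le 1-\sqrt2/2$ reduces to $\eta \lesssim 1/(Ld)$, the other branch.

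With these in hand, assembling is routine. Plugging $\phi \ge c_0\rho\sqrt{\eta/\beta}$ into Lemma~\ref{lmApproximateConvergence} gives $1 - \phi^2/8 \le 1 - (c_0^2\rho^2/(8\beta))\eta =: 1 - C_0\eta$ with $C_0 = \widetilde{O}(\rho^2\beta^{-1})$, and $16\delta/\phi \le 16\cdot O(Ld\eta)/(c_0\rho\sqrt{\eta/\beta}) = O(Ld\beta^{1/2}\rho^{-1})\,\eta^{1/2} =: C_1\eta^{1/2}$ with $C_1 = \widetilde{O}(d\beta^{1/2}\rho^{-1})$, yielding $\|\mu_k - \pi\|_{TV} \le \lambda(1-C_0\eta)^k + C_1\eta^{1/2}$. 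To reach accuracy $\epsilon$ I would split it in half: choose $\eta$ small enough that $C_1\eta^{1/2}\le\epsilon/2$, i.e. $\eta = \widetilde{O}\big(\rho^2\epsilon^2 d^{-2}\beta^{-1}\big)$ (for $\epsilon$ small this is still below both admissibility branches, so nothing above breaks), and then take $k \ge (C_0\eta)^{-1}\log(2\lambda/\epsilon)$ so that $\lambda(1-C_0\eta)^k \le \lambda e^{-C_0\eta k}\le\epsilon/2$; substituting $C_0\eta = \widetilde{O}\big(\rho^4\epsilon^2 d^{-2}\beta^{-2}\big)$ gives $k = \widetilde{O}\big(d^2\beta^2\log(\lambda)\,\rho^{-4}\epsilon^{-2}\big)$, and summing the two halves yields $\|\mu_k-\pi\|_{TV}\le\epsilon$.

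The main obstacle is not any individual inequality but the simultaneous bookkeeping: one has to exhibit a \emph{single} choice of $\eta$ that meets the hypotheses of Lemma~\ref{lmConductanceLB}, the $\delta \le \phi/16$ smallness of Lemma~\ref{lmApproximateConvergence} (which is genuinely a constraint on $\sqrt{\eta}$, since both $\delta \propto \eta$ and $\phi \propto \sqrt{\eta}$ shrink with $\eta$ but at different rates), \emph{and} the target precision $C_1\eta^{1/2}\le\epsilon/2$ --- and then verify that every hidden $\widetilde{O}$ absorbs only polylogarithmic factors and the model constants $L = MB+\lipsG^2$, $D = \lipsG^2\|A^\top A\|$, $R$, $B$, never anything growing in $d$, $\beta$, $\rho$ or $\epsilon$. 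A secondary care point is confirming that the regularity bounds on $L$ and $D$ from Appendix~\ref{app:keyLemmas} are uniform over $\zdom = \mathcal{B}(0,R)$, since the conductance argument and the $r = \sqrt{10\eta d/\beta}$ truncation radius in Lemma~\ref{lmAuxiliaryCloseness} both rely on them holding everywhere on the support.
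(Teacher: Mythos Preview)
Your proposal is correct and follows essentially the same route as the paper: chain Lemmas~\ref{lmAuxiliaryCloseness}, \ref{lmApproximateConvergence}, \ref{lmConductanceLB}, simplify $\delta = O(Ld\eta)$ under the step-size constraint $\eta \le d/(25\beta D^2)$, plug the conductance bound $\phi \ge c_0\rho\sqrt{\eta/\beta}$ into the mixing estimate to read off $C_0, C_1$, and then split $\epsilon$ in half to solve for $\eta$ and $k$. Your discussion of the simultaneous admissibility bookkeeping and the absorption of $L,D,R,B$ into $\widetilde{O}$ is more explicit than the paper's, but the argument is the same.
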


The convergence rate is polynomial in the Cheeger constant $\rho$ whose lower bound is difficult to obtain generally. A rough bound $\rho = e^{-\tilde O(d)}$ can be derived using the Poincar\'e constant of the distribution $\pi$, under the smoothness assumption. See \citep{bakry2008simple} for details.

\begin{proof}[Proof outline of Theorem \ref{thmConvergenceInTV}] To prove the result, we find a sufficient condition for $\eta$ that fulfills the requirements of Lemmas \ref{lmAuxiliaryCloseness}, \ref{lmApproximateConvergence} and \ref{lmConductanceLB} hold. For $\eta \le \frac{d}{25\beta D^2}$, we have 
\[
\delta = 10Ld\eta  + 10LD d^{1/2}\beta^{1/2}\eta^{3/2} \le 12Ld\eta.
\]
Moreover, Lemma \ref{lmApproximateConvergence} requires $\delta \le \min\{1-\sqrt{2}/2, \phi/16\}$, while $\phi \ge c_0\rho\sqrt{\eta/\beta}$ by Lemma \ref{lmConductanceLB}, so we can set
\[
\eta = \min \biggl\{\frac{1}{30Ld}, \frac{d}{25\beta D^2} , \frac{c_0^2\rho^2}{(156Ld)^2\beta} \biggr\}
\]
for these conditions to hold. Putting all together, we obtain
\begin{align*}
\|\mu_k - \pi\|_{TV} &\le \lambda\big(1 - \phi^2/8\big)^k + \frac{16\delta}{\phi} \notag\\
&\le \lambda(1- C_0 \eta)^{k} + C_1\eta^{1/2},
\end{align*}
where $C_0 = c_0^2\rho^2/8\beta$, $C_1 = 156Ld\beta^{1/2}\rho^{-1}/c_0$. Therefore, we have proved the first part.

For the second part, to achieve $\epsilon$-sampling error, it suffices to choose $\eta$ and $k$ such that
\begin{align*}
\lambda(1-C_0\eta)^k \le \frac{\epsilon}{2}, ~\text{and} \quad C_1\eta^{1/2} \le \frac{\epsilon}{2}.
\end{align*}
Plugging in $C_0, C_1$ above, we can choose 
\begin{align*}
\eta = O\bigg(\frac{\rho^2\epsilon^2}{d^2\beta}\bigg)\ \text{and} \ k = O\bigg(\frac{\log(\lambda/\epsilon)}{C_0\eta}\bigg) = \widetilde{O}\bigg(\frac{d^2\beta^2\log(\lambda)}{\rho^4\epsilon^2}\bigg)
\end{align*}
such that $\|\mu_k - \pi\|_{TV} \le \eps$, which completes the proof.
\end{proof}

\subsection{Existence of $\lambda$-warm start initial distribution $\mu_0$}
Apart from the step size and the number of iterations, the convergence depends on $\lambda$, the goodness of the initial distribution $\mu_0$. In this part, we specify a particular choice of $\mu_0$ in establish this.

\begin{definition}[Set-Restricted Eigenvalue Condition, \citep{bora2017CS}]
For some parameters $\tau >0$ and $o \ge 0$, $A \in \Reals^{m\times n}$ is called $\text{S-REC}(\tau, o)$ if for all $z, z' \in \zdom$, 
\[
\| A(G(z) - G(z'))\| \ge \tau \| G(z) - G(z') \| - o.
\]
\end{definition}

\begin{lemma}
\label{lmGaussInit}
Suppose that $G(z)$ satisfies the near-isometry property in Assumption \textbf{A.2}, and $F(z)$ is $L$-smooth. If $A$ is $\text{S-REC}(\tau, 0)$, then the Gaussian distribution $\mathcal{N}(0, \frac{1}{2\beta L}\mathbb{I})$ supported on $\zdom$ is a $\lambda$-warm start with respect to $\pi$ with $\lambda = e^{O(d)}$.
\end{lemma}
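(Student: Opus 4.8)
The plan is to bound the warm-start constant by a \emph{pointwise} comparison of the two densities, and then control the resulting expression in two pieces. Both $\mu_0$ and $\pi$ are absolutely continuous on $\zdom$, with densities proportional to $e^{-\beta L\|z\|^2}$ and $e^{-\beta F(z)}$ respectively; write $Z_{\mu_0}=\int_{\zdom}e^{-\beta L\|z\|^2}\,\d z$ and $Z_\pi=\int_{\zdom}e^{-\beta F(z)}\,\d z$ for the normalizers. Since $\mu_0(\mathcal{A})=\int_{\mathcal{A}}\tfrac{\d\mu_0}{\d z}\,\d z$ and likewise for $\pi$, it suffices to find $\lambda$ with $\tfrac{\d\mu_0}{\d z}(z)\le\lambda\,\tfrac{\d\pi}{\d z}(z)$ for all $z\in\zdom$, i.e.\ with
\[
\frac{Z_\pi}{Z_{\mu_0}}\,\exp\!\big(\beta F(z)-\beta L\|z\|^2\big)\ \le\ \lambda \qquad\text{for every }z\in\zdom .
\]
So I would bound the ``pointwise'' factor $\exp(\beta F(z)-\beta L\|z\|^2)$ and the normalizer ratio $Z_\pi/Z_{\mu_0}$ separately.

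For the pointwise factor I would use $F(z^*)=0$ together with Assumption \textbf{(A.2)}: on $\zdom=\mathcal{B}(0,R)$, near-isometry gives $F(z)=\|A(G(z)-G(z^*))\|^2\le\|A^\top A\|\,\lipsG^2\|z-z^*\|^2\le 4\|A^\top A\|\,\lipsG^2 R^2$ (alternatively, $L$-smoothness plus a completion of the square around $z^*$ gives $F(z)-L\|z\|^2\le L\|z^*\|^2+O(R)$). Either way the pointwise factor is at most $e^{c_0}$ for a constant $c_0$ that does not grow with $d$.

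For the normalizer ratio I would exploit the S-REC$(\tau,0)$ hypothesis: together with near-isometry it gives $F(z)\ge\tau^2\lowerG^2\|z-z^*\|^2$ on $\zdom$, so extending the integral to all of $\Reals^d$, $Z_\pi\le\int_{\Reals^d}e^{-\beta\tau^2\lowerG^2\|z-z^*\|^2}\,\d z=(\pi/\beta\tau^2\lowerG^2)^{d/2}$. For the denominator, the Gaussian $\mathcal{N}(0,\tfrac{1}{2\beta L}\mathbb{I})$ has expected squared norm $d/(2\beta L)$, so when $R\gtrsim\sqrt{d/(\beta L)}$ a constant fraction of its mass sits in $\zdom$ (Markov's inequality) and $Z_{\mu_0}\ge\tfrac12(\pi/\beta L)^{d/2}$; for smaller $R$ one integrates instead over $\mathcal{B}(0,R)\subseteq\zdom$ and invokes the volume of a $d$-ball. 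In the generic case the powers of $\beta$ and $\pi$ cancel, leaving $Z_\pi/Z_{\mu_0}\le 2(L/\tau^2\lowerG^2)^{d/2}=e^{O(d)}$. Multiplying by the pointwise bound gives $\lambda=e^{O(d)}$, where the hidden constant depends on $\beta,L,R,\tau,\lowerG,\lipsG$ and $\|A\|$ but not on $d$.

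The step I expect to be the main obstacle is the normalizer estimate. One has to lower bound the \emph{truncated}-Gaussian normalizer $Z_{\mu_0}$ without surrendering dimension-dependent factors, and its scaling in $\beta$ has to be matched against the Gaussian-integral upper bound on $Z_\pi$ produced by S-REC, so that what survives is the dimension-free ``volume ratio'' $(L/\tau^2\lowerG^2)^{d/2}$ rather than a quantity that blows up with $\beta$ or $\beta R^2$. Handling the edge case in which $R$ is only of order $1/\sqrt{\beta L}$ (so that truncation removes most of the Gaussian mass) is where the argument needs the most care.
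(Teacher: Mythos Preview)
Your approach is essentially the same as the paper's: both bound the density ratio pointwise and split into the pointwise factor $e^{\beta F(z)-\beta L\|z\|^2}$ and the normalizer ratio $Z_\pi/Z_{\mu_0}$. Your second suggestion for the pointwise factor---using $L$-smoothness and $F(z^*)=0$ to get $F(z)\le L\|z\|^2+L\|z^*\|^2$, hence $e^{\beta F(z)-\beta L\|z\|^2}\le e^{\beta L\|z^*\|^2}$---is exactly what the paper does, and your normalizer bound on $Z_\pi$ via S-REC and near-isometry matches the paper's as well.

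One small note: you are actually \emph{more} careful than the paper about the denominator $Z_{\mu_0}=\Gamma$. The paper simply writes $(\pi/\beta\tau^2\lowerG^2)^{d/2}\,e^{\beta L\|z^*\|^2}/\Gamma=e^{O(d)}$ without explicitly lower-bounding $\Gamma$, whereas you argue (correctly) that for $R\gtrsim\sqrt{d/(\beta L)}$ a constant fraction of the untruncated Gaussian mass lies in $\zdom$, giving $\Gamma\ge\tfrac12(\pi/\beta L)^{d/2}$ and the clean cancellation $Z_\pi/Z_{\mu_0}\le 2(L/\tau^2\lowerG^2)^{d/2}$. The edge case you flag (small $R$) is not handled in the paper either, so you need not worry that you are missing a trick there.
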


\begin{proof}
Let $\mu_0$ denote the truncated Gaussian distribution $\mathcal{N}(0, \frac{1}{2\beta L}\mathbb{I})$ on $\zdom$ whose measure is
$$\mu_0(\d z) = {e^{-\beta L\|z\|_2^2} \1(z\in \zdom) \d z}/{\Gamma}$$ 
where $ \Gamma = \int_{\zdom}e^{-\beta L\|z \|_2^2}\d z $ is the normalization constant. 

Along with the target measure $\pi$, we can easily verify that
\begin{align*}
\frac{\mu_0(\d z)}{\pi(\d z)} \le  \frac{\int_{\zdom} e^{-\beta F(z)}\d z}{\Gamma} \cdot e^{-\beta L\|z\|_2^2+\beta F(z)}.
\end{align*}
Our goal is to bound the right hand side. Using the smoothness and the simple fact $F(z^*) = 0$, we have 
\begin{align*}
F(z)\le \frac{L}{2}\|z - z^*\|_2^2 \le L\|z^*\|_2^2 + L\|z\|_2^2,
\end{align*}
which implies that $e^{-\beta L\|z\|_2^2 + \beta F(z)} \le e^{\beta L\|z^*\|_2^2}$.

To bound $\int_{\zdom} e^{-\beta F(z)}\d z$, we use the S-REC property of $A$ as well as the near-isometry of $G(z)$. Recall the objective function:
\begin{align*}
    F(z) &= \| y - AG(z) \|^2 = \| A(G(z) - G(z^*) \|^2
    \ge \tau^2 \|G(z) - G(z^*)\|^2 - o \ge \tau^2 \lowerG^2 \|z - z^*\|^2, 
\end{align*}
where we have dropped $o$ for simplicity. Therefore,
\begin{align*}
    \int_{\zdom} e^{-\beta F(z)}\d z &\le \int_{\zdom} e^{-\beta \tau^2 \lowerG^2 \|z - z^*\|^2}\d z
    \le \left(\frac{\pi}{\beta \tau^2 \lowerG^2}\right)^{d/2}.
\end{align*}

Putting the above results together, we can get
\begin{align*}
\lambda  \le \max_{z\in K}\frac{\mu_0(\d z)}{\pi(\d z)} \le \bigg(\frac{\pi}{\beta \tau^2 \lowerG^2}\bigg)^{d/2} \frac{e^{\beta L\|z^*\|_2^2}}{\Gamma} = e^{O(d)},
\end{align*}
and conclude the proof.
\end{proof}
 
\subsection{Completing the proof}
\label{sec:completeproof}
\begin{proof}[Proof of Theorem \ref{thmMain}]
Consider a random draw $\wh{Z}$ from $\mu_{k}$ and another $\wh{Z}^*$ from  $\pi$. We have
\begin{align*}
	\E[F(\wh{Z})]
& = \left( \E [F(\wh{Z})] - \E [F(\wh{Z}^*)] \right) + \E [F(\wh{Z}^*)].
\end{align*}
We will first give a crude bound for the second term $\E [F(\wh{Z}^*)]$ following the idea from \citep{raginsky2017}:
\begin{align*}
	\E [F(\wh{Z}^*)] = \int_{\zdom} F(z) \pi(\d z) 
	&\le \cO\left(\frac{d}{\beta}\log \frac{\beta}{d}\right) .
\end{align*}
The detailed proof is given in Appendix \ref{app:MarkovSemi}. The first term is related to the convergence of $\mu_k$ to $\pi$ in total variation shown in Theorem \ref{thmConvergenceInTV}. Notice that $F(z) \le 2R\|A\|\lipsG $ for all $z \in \zdom$ due the Lipschitz property of the generative network $G$. Moreover, by Theorem \ref{thmConvergenceInTV}, we have $\|\mu_k - \pi\|_{TV} \le \eps'$ for any $\eps' >0$ and a sufficiently large $k$. Hence, the first term is upper bounded by
\begin{align*}
     &\left\vert \int_{\zdom} F(z) \mu_{k}(\d z) - \int_{\zdom} F(z) \pi(\d z)  \right\vert 
    \le 2R\|A\|\lipsG \left\vert \int_{\zdom} \mu_{k}(\d z) - \int_{\zdom} \pi(\d z) \right\vert
    \le 2R\|A\|\lipsG \eps'.
\end{align*}

Given the target error $\eps$, choose $\eps' = {\eps}/{(2R\|A\|\lipsG)}$. By Lemma \ref{lmGaussInit}, we have $\lambda = e^{O(d)}$.
Then, for
\[
  \eta = \widetilde{O}\left(\frac{\rho^2 \eps^2}{d^2\beta} \right),~ \text{and} \quad k = \widetilde{O}\left(\frac{d^3\beta^2}{\rho^4\eps^2}\right), ~\text{we have}
\] 
\[
\E[F(\wh{Z})] \le \eps + \cO\left(\frac{d}{\beta}\log \frac{\beta}{d}\right).
\]
Therefore, we complete the proof of our main result. 
\end{proof}

\section{Experimental results}
\label{sec:exp}

While we emphasize that the primary focus of our paper is theoretical, we corroborate our theory with representative experimental results on MNIST and CIFAR-10. Even though we require requires a bounded domain within $d$-dimensional Euclidean ball in our analysis, empirical results demonstrate that our approach works without the restriction.

\begin{figure}[h!]
    \centering
    \begin{tabular}{ccc}
        \raisebox{1.3\height}{\includegraphics[width=0.1\linewidth]{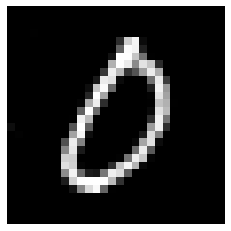}} &
        \includegraphics[width=0.42\linewidth]{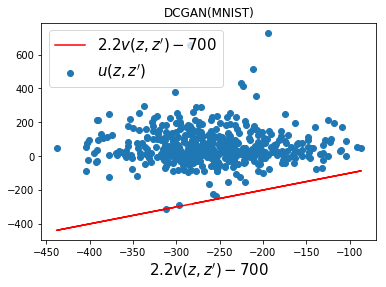} &
        \includegraphics[width=0.42\linewidth]{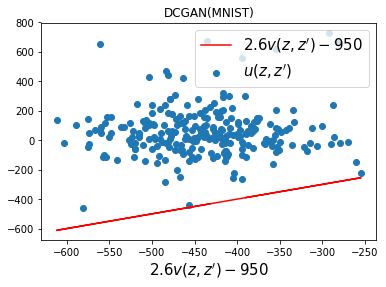}      \\ 
        & (a) & (b) \\
        \raisebox{1.3\height}{\includegraphics[width=0.1\linewidth]{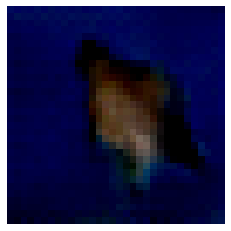}} &
        \includegraphics[width=0.42\linewidth]{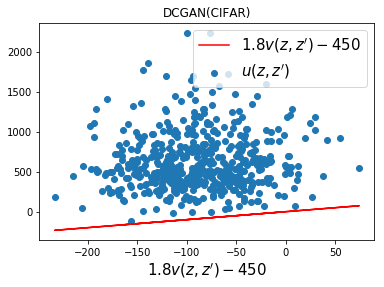} 
        &
        \includegraphics[width=0.42\linewidth]{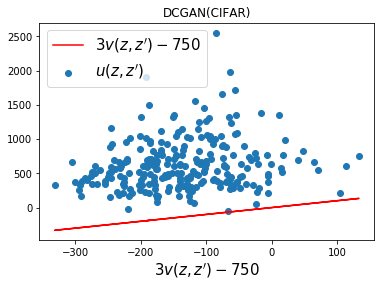}
        \\
      & (c) & (d)
    \end{tabular}
    \caption{[MNIST] selected base digit $G(z^*)$, evaluating (a) dissipativity (b) \eqref{eqnDissipative4-app}, [CIFAR] selected base image $G(z^*)$, evaluating (c) dissipativity (d)  \eqref{eqnDissipative4-app}.}
    \label{fig:bounds-app} 
\end{figure}

\subsection{Validation of strong smoothness}
As mentioned above, our theory relies on the assumption that the following condition holds for some constants $\alpha > 0, \gamma \ge 0$ and $\forall z, z' \in \zdom$ for a domain $\zdom$. Here, we take $\zdom = \Reals^d$.  
\begin{align*}
\langle G(z)-G(z'), \nabla_z G(z)(z-z') \rangle \geq \alpha \|z-z'\|^2 - \gamma.
\end{align*}

To estimate these constants, we generate samples $z$ and $z'$ from $\mathcal{N}(0,\mathbb{I})$. To establish $\alpha$ and $\gamma$, we perform experiments on two different datasets (i) MNIST (Net1) and (ii) CIFAR10 (Net2). For both datasets, we compute the terms $u(z,z') = \nabla_{z} G(z)^{\top}(G(z)-G(z')),z-z' \rangle$ and $v(z,z') = \|z-z'\|^2$ for 500 different instantiations of $z$ and $z'$. We then plot these pairs of $(\alpha v - \gamma,u)$ samples for different $z$'s and $z'$'s and compute  the values of $\alpha$ and $\gamma$ by a simple linear program. We do this experiment for two DCGAN generators trained on MNIST (Figure \ref{fig:bounds-app} (a)) as well as on CIFAR10 (Figure \ref{fig:bounds-app} (c)).

Similarly for the compressed sensing case, we also derive values $\alpha_A$ and $\gamma_A$, where a compressive matrix $A$ acts on the output of the generator $G$. Here, we have picked the number of measurements $m = 0.1n$ where $n$ is the signal dimension. This is encapsulated in the following equation:
\begin{align}
    \langle \nabla_z (AG(z))^{\top}(AG(z)-AG(z')),z-z' \rangle \geq \alpha_A \|z-z'\|^2 - \gamma_A 
    \label{eqnDissipative4-app}
\end{align}
for all possible Gaussian matrices $A$ and different instantiations of $z$ and $z'$. Here, we capture the left side of the inequality in $u(z,z') = \langle \nabla_{z} (AG(z))^{\top}(AG(z)-AG(z')),z-z' \rangle$. 
We similarly plot points $(\alpha_A v(z, z') -\gamma_A, u(z, z'))$ for all pairs $(z, z')$. The scatter plot is generated for 500 different instantiations of $z$ and $z'$ and $5$ different instantiations of $A$.  We do this experiment for two DCGAN generators, one trained on MNIST (Figure \ref{fig:bounds-app} (b)) and the other trained on CIFAR10 (Figure \ref{fig:bounds-app} (d)). These experiments indicate that the dissipativity constant $\alpha$ is positive in all cases.

\subsection{Comparison of SGLD against GD}
\begin{figure*}[!ht]
     \centering
     \begin{tabular}{cccc}
         \includegraphics[width=0.2\linewidth]{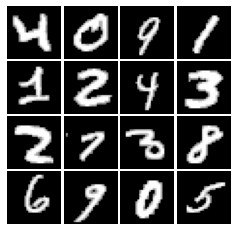} &       \includegraphics[width=0.2\linewidth]{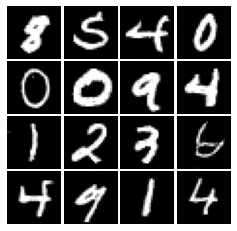} &    \includegraphics[width=0.2\linewidth]{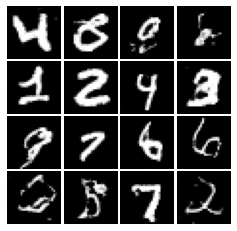} &          \includegraphics[width=0.2\linewidth]{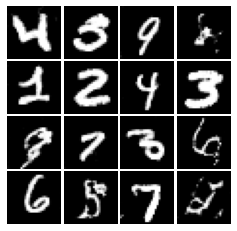} \\
         (a) Ground truth & (b) Initial & (c) GD  & (d) SGLD \\
         &  & MSE = 0.0447 & MSE = 0.0275
     \end{tabular}
     \caption{Comparing the recovery performance of SGLD and GD at $m=0.2n$ measurements. \label{fig:reconstructions}}
 \end{figure*}
 
We test the SGLD reconstruction by using the update rule in \eqref{eqnSGLD} and compare against the optimizing the updates of $z$ using standard gradient descent as in \cite{bora2017CS}.  For all experiments, we use a pre-trained DCGAN generator, with network configuration described as follows: the generator consists of four different layers consisting of transposed convolutions, batch normalization and RELU activation; this is followed by a final layer with a transposed convolution and $\tanh$ activation \cite{DCGAN}. 
 
We display the reconstructions on MNIST in Figure \ref{fig:reconstructions}. Note that the implementation in \cite{bora2017CS} requires 10 random restarts for CS reconstruction and they report the results corresponding to the best reconstruction. This likely suggests that the standard implementation is likely to get stuck in bad local minima or saddle points. For the sake of fair comparison, we fix the same random initialization of latent vector $z$ for both GD and SGLD with no restarts. We select $m=0.2n$. In Figure \ref{fig:reconstructions} we show reconstructions for the 16 different examples, which were all reconstructed at once using same $k=2000$ steps, learning rate of $\eta=0.02$ and the inverse temperature $\beta=1$ for both approaches. The only difference is the additional noise term in SGLD (Figure \ref{fig:reconstructions} part (d)). Notice that this additional noise component helps achieve better reconstruction performance overall as compared to simple gradient descent.

Phase transition plots scanning a range of compression ratios $m/n$ as well as example reconstructions on CIFAR-10 images can be found in the supplement. More thorough empirical comparisons with PGD-based approaches~\cite{shah2018solving,raj2019gan} are deferred to future work.

\subsection{Reconstructions for CIFAR10}

We display the reconstructions on CIFAR10 in Figure \ref{fig:reconstructions2}. As with the implementation for MNIST, for the sake of fair comparison, we fix the same random initialization of latent vector $z$ for both GD and SGLD with no restarts. We select $m=0.3n$. In Figure \ref{fig:reconstructions2} we show reconstructions for the 16 different examples from MNIST, which were all reconstructed at once using same $k=2000$ steps, learning rate of $\eta=0.05$ and the inverse temperature $\beta=1$ for both approaches. The only difference is the additional noise term in SGLD (Figure \ref{fig:reconstructions} part (d)). Similar to our experiments on MNIST we notice that this additional noise component helps achieve better reconstruction performance overall as compared to simple gradient descent. 
 
 \begin{figure}[!h]
     \centering
     \begin{tabular}{cccc}
         \includegraphics[width=0.23\linewidth]{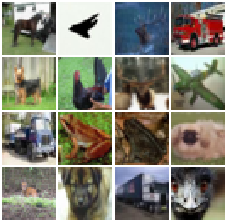} &               \includegraphics[width=0.23\linewidth]{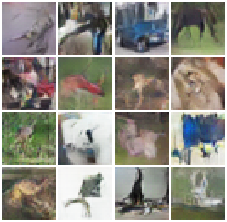} &    \includegraphics[width=0.23\linewidth]{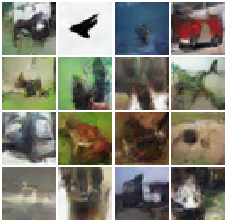} &          \includegraphics[width=0.23\linewidth]{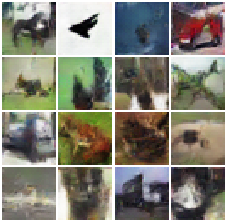} \\
         (a) Ground truth & (b) Initial & (c) GD  & (d) SGLD \\
         &  & MSE = 0.0248 & MSE = 0.0246
     \end{tabular}
     \caption{ [CIFAR10] Comparing the recovery performance of SGLD and GD at $m=0.3n$ measurements.} \label{fig:reconstructions2}
 \end{figure}
 
 Next, we plot phase transition diagrams by scanning the compression ratio $f=m/n = [0.2,0.4,0.6,0.8,1.0]$ for the MNIST dataset in Figure \ref{fig:phasetrans}. For this experiment, we have chosen 5 different instantiations of the sampling matrix $A$ for each compression ratio $f$. In Figure \ref{fig:phasetrans} we report the average Mean Square Error (MSE) of reconstruction $\|\hat{x}-x\|^2$ over 5 different instances of $A$.
 \begin{figure}[!h]
     \centering
     \begin{tikzpicture}[scale=0.5] 
\begin{semilogyaxis}
			[width=0.75\textwidth,
			xlabel= Compression ratio $f$, 
			ylabel= $\log$ MSE,
			grid style = dashed,
			grid=both,
			y label style={at={(axis description cs:(-0.1,0.5)},anchor=north},
			legend style=
			{at={(0.7,0.95)}, 
				anchor=south west, 
				anchor= north , 
			} ,
			]
	
			\addplot[color=magenta, solid, mark=*, mark size = 2, line width =2] plot coordinates {
				(0.2,     0.1464) 
				(0.4,  0.0732)
				(0.6,  0.0461)
				(0.8,     0.0429)
				(1.0,    0.0436)
				
			};
			
				\addplot[color=blue, solid, mark=square, line width=2] plot coordinates {
				(0.2,     0.1434) 
				(0.4,  0.0679)
				(0.6,  0.0451)
				(0.8,     0.0395)
				(1.0,    0.0404)
			};

			\legend{GD\\SGLD\\}
			\end{semilogyaxis}
		\end{tikzpicture}
     \caption{Phase transition plots representing average MSE of reconstructed image using gradient descent and stochastic gradient Langevin dynamics.}
     \label{fig:phasetrans}
 \end{figure}
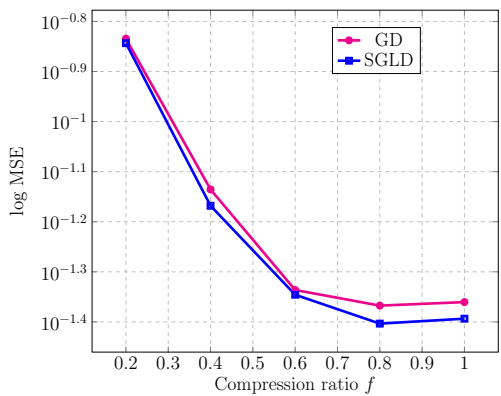
 We conclude that SGLD gives improved reconstruction quality as compared to GD. 

\clearpage

\bibliographystyle{icml2021}
\bibliography{refs}

\clearpage
\appendix

\section{Conditions on the generator network}
\label{app:gen}

\begin{proposition}
    \label{propAssumptionHold-app}
    Suppose $G(z) : \mathcal{D} \subset \Reals^d \rightarrow \Reals^n$ is a feed-forward neural network with layers of non-increasing sizes and compact input domain $\mathcal{D}$. Assume that the non-linear activation is a continuously differentiable, strictly increasing function. Then, $G(z)$ satisfies Assumptions \textbf{(A.2)} \& \textbf{(A.3)} with constants $\lowerG, \lipsG, M$, and if $2\lowerG^2 > M \lipsG$, the strong smoothness in Definition \ref{defStrongSmoothness} also holds almost surely with respect to the Lebesgue measure.
\end{proposition}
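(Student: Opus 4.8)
The plan is to reduce everything to the two elementary building blocks out of which $G$ is composed --- affine maps $x \mapsto W_i x + b_i$ and coordinate-wise applications of the scalar activation $\sigma$ --- verify each property for those blocks, and then propagate it through the composition $G = g_L \circ \cdots \circ g_1$ by induction on depth. Compactness of $\mathcal{D}$ is used everywhere to confine each intermediate pre-activation to a compact set; on such a set the continuous maps $\sigma$, $\sigma'$ (and difference quotients of $\sigma'$) are bounded above, and, crucially, $\sigma'$ is bounded \emph{below} by a strictly positive constant since $\sigma$ is $C^1$ and strictly increasing. The only genericity that is needed is that each weight matrix $W_i$ has full column rank --- this is the role of the layer-size hypothesis --- and since the set of rank-deficient matrices is Lebesgue-null, this holds for almost every choice of the $W_i$, which is the source of the ``almost surely with respect to the Lebesgue measure'' qualifier in the statement.

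Assumptions \textbf{(A.2)} and \textbf{(A.3)} are the routine part. The upper bound in \textbf{(A.2)} is just the product of the operator norms $\|W_i\|$ and the (compact-domain) Lipschitz constant of $\sigma$. For the lower bound I would first bound the smallest singular value of the Jacobian uniformly: by the chain rule $\nabla_z G(z) = \prod_i \mathrm{diag}\big(\sigma'(\cdot)\big)\,W_i$, each diagonal factor has entries $\ge \inf \sigma' > 0$ on the relevant compact pre-activation set and each $W_i$ is injective, so $\sigma_{\min}(\nabla_z G(z)) \ge \prod_i \sigma_{\min}(W_i)\,\inf\sigma' =: \iota_0 > 0$ uniformly in $z$ (using $\sigma_{\min}(AB)\ge\sigma_{\min}(A)\sigma_{\min}(B)$ for full-column-rank factors). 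This pointwise bound is then upgraded to the \emph{global} lower-Lipschitz constant $\iota_G$ by a standard compactness argument: were $\inf_{z\ne z'}\|G(z)-G(z')\|/\|z-z'\|$ equal to $0$, a convergent-subsequence argument would force either $G$ to fail injectivity (impossible, as full-column-rank $W_i$ together with strictly increasing $\sigma$ make $G$ injective) or $\sigma_{\min}(\nabla_z G)$ to vanish at the limit point (impossible, via the first-order Taylor expansion and $\iota_0 > 0$). Assumption \textbf{(A.3)} follows from the chain and product rules: $z \mapsto \nabla_z G(z)$ is a polynomial in the constant $W_i$ and in the maps $z \mapsto \sigma'(W_i y_{i-1}(z) + b_i)$, each factor bounded and Lipschitz on $\mathcal{D}$; this last point uses that $\sigma'$ is Lipschitz on compact sets, which holds for the standard smooth activations considered (sigmoid, $\tanh$, ELU), and I would flag it as the one place where a hair more than bare $C^1$-ness is convenient.

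The heart of the proof is the strong-smoothness inequality. I would establish it first for a single activation layer $g(x) = \sigma(Wx+b)$: writing $h = W(x-x')$ and applying the mean value theorem coordinate-wise,
\[
\langle g(x)-g(x'),\ \nabla g(x)(x-x')\rangle \;=\; \sum_j \sigma'(\xi_j)\,\sigma'\big((Wx+b)_j\big)\,h_j^2 \;\ge\; (\inf\sigma')^2\,\|W(x-x')\|^2 \;\ge\; \alpha_0\|x-x'\|^2,
\]
with $\alpha_0 = (\inf\sigma')^2\sigma_{\min}(W)^2 > 0$ (an affine layer contributes the obvious $\|W(x-x')\|^2$). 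For the depth-$L$ network I would then induct: split $G = g_L\circ G'$ with $G'$ the first $L-1$ layers, and write $\nabla_z G(z)(z-z') = \nabla g_L(G'(z))\big[(G'(z)-G'(z')) + \varrho\big]$, where $\varrho$ is the first-order Taylor remainder of $G'$, bounded by $\tfrac{M_{G'}}{2}\|z-z'\|^2$ via \textbf{(A.3)} applied to $G'$. The ``aligned'' term $\langle g_L(G'(z))-g_L(G'(z')),\ \nabla g_L(G'(z))(G'(z)-G'(z'))\rangle$ is controlled by the single-layer bound and the near-isometry of $G'$, yielding $+\,c\,\iota_{G'}^2\|z-z'\|^2$; the remainder term is bounded below by $-\|g_L(G'(z))-g_L(G'(z'))\|\,\|\nabla g_L\|_{\mathrm{op}}\,\|\varrho\| \ge -c'\kappa_{G'}\kappa_L\,\tfrac{M_{G'}}{2}\|z-z'\|^3$, which on the compact $\mathcal{D}$ is at most $\mathrm{diam}(\mathcal{D})$ times a quadratic. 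Collecting the accumulated per-layer constants into the aggregate $\iota_G,\kappa_G,M$, the net quadratic coefficient is positive exactly when the isometric gain beats the curvature loss, i.e.\ under $2\iota_G^2 > M\kappa_G$, and whatever deficit survives on the ``far'' part of $\mathcal{D}\times\mathcal{D}$ is a finite constant that is absorbed into the slack $\gamma \ge 0$ --- which is precisely why the weaker two-parameter Definition~\ref{defStrongSmoothness} (rather than the slack-free version of \citeauthor{latorre2019fast}) is the natural target here.

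I expect the main obstacle to be the bookkeeping in the inductive step: one must choose, layer by layer, how to split the vector $\nabla_z(\text{subnetwork})(z-z')$ into an ``isometric part'' plus a ``remainder part'' so that the local gains and losses telescope cleanly into the \emph{global} constants $\iota_G,\kappa_G,M$ and the single threshold $2\iota_G^2 > M\kappa_G$, rather than into an unwieldy depth-dependent product of local quantities. A secondary subtlety is maintaining uniform strict positivity of $\sigma'$ on the pre-activation sets --- immediate for sigmoid/ELU-type activations, and the reason ReLU falls outside the scope --- together with the ``almost surely w.r.t.\ Lebesgue'' genericity that keeps every $W_i$ full column rank, without which $\iota_G$ collapses to $0$ and the entire chain of bounds fails.
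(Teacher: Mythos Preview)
Your treatment of \textbf{(A.2)} and \textbf{(A.3)} is essentially what the paper does (it simply cites \citep{latorre2019fast}, Lemma~5), and your flag about needing ``a hair more than bare $C^1$-ness'' for Lipschitzness of the Jacobian is well taken --- the paper slides over the same point by asserting that a continuous matrix-valued function on a compact set is Lipschitz.

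For strong smoothness, however, you take a genuinely different and considerably harder route. Your plan is an induction on depth: peel off the last layer, split $\nabla_z G(z)(z-z')$ into an ``aligned'' piece and a Taylor remainder, and hope the per-layer gains and losses telescope into the global constants $\iota_G,\kappa_G,M$. You correctly identify the pain point yourself: the bookkeeping needed to land exactly on the threshold $2\iota_G^2>M\kappa_G$ is not obvious, and your cubic remainder $\|z-z'\|^3$ forces you to burn a factor of $\mathrm{diam}(\mathcal{D})$ and absorb the deficit into the slack $\gamma$. The paper sidesteps all of this with a single add-and-subtract: write
\[
\langle G(z)-G(z'),\,\nabla_z G(z)(z-z')\rangle
=\|G(z)-G(z')\|^2-\bigl\langle G(z)-G(z'),\,G(z)-G(z')-\nabla_z G(z)(z-z')\bigr\rangle,
\]
use the fundamental theorem of calculus to express the second factor as $\int_0^1(\nabla_z G(u(t))-\nabla_z G(z))(z-z')\,\d t$ with $u(t)=tz+(1-t)z'$, and bound by near-isometry on the first term and by $\kappa_G\cdot M\int_0^1(1-t)\,\d t\cdot\|z-z'\|^2$ on the second. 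This gives $(\iota_G^2-\tfrac{1}{2}\kappa_G M)\|z-z'\|^2$ directly, with $\gamma=0$, no induction, no per-layer constants, and the threshold $2\iota_G^2>M\kappa_G$ appearing for free. The moral is that once \textbf{(A.2)} and \textbf{(A.3)} are in hand \emph{globally}, strong smoothness follows from them by a three-line integral-remainder argument; there is no need to revisit the layer structure at all. Your approach would eventually work, but it fights a bookkeeping battle that the paper never enters.
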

\begin{proof} 
The proof proceeds similar to \cite{latorre2019fast}, Appendix B. Since $G(z)$ is a composition of linear maps followed by $C^1$ activation functions, $G(z)$ is continuously differentiable. As a result, the Jacobian $\nabla_z G$
is a continuous matrix-valued function and its restriction
to the compact domain $\mathcal{D} \subseteq \Reals^d$ is Lipschitz-continuous. Therefore, there exists
$M \geq 0$ such that
\begin{equation}
    \label{eqnJacobianLipschitz}
    \| \nabla_zG(z) - \nabla_z G(z') \| \leq M \|{z-z'}\|, \qquad \forall z, z' \in \mathcal{D}.
\end{equation}
Thus, Assumption \textbf{(A.3)} holds. Assumption \textbf{(A.2)} is also satisfied according to \cite{latorre2019fast}, Lemma 5. To show the strong smoothness, we use the fundamental theorem of calculus with the Lipchitzness of $G(z)$ obtained by Assumption \textbf{(A.2)}. For every $z,z'\in \mathcal{D}$, and $u(t) = tz + (1-t)z'$:
\begin{align*}
\langle G(z) - G(z'),  &\nabla_z G(z)(z-z') \rangle \\
&=  \| G(z) - G(z') \|^2 - \langle G(z) - G(z'), G(z) - G(z') - \nabla_z G(z)(z-z') \rangle \\
&= \| G(z) - G(z') \|^2 - \int_{0}^1 \langle G(z) - G(z'), \bigl(\nabla_z G(u(t)) - \nabla_z G(z)\bigr)(z-z') \rangle \d t \\
&\geq \lowerG^2 \| z - z' \|^2 - \lipsG M  \|z - z' \|^2 \int_{0}^1 (1-t)  \d t \\
&= (\lowerG^2 - \frac{\lipsG M}{2}) \| z - z' \|^2,
\end{align*}
where in the last step we use the near-isometry and the Lipschitzness of $\nabla_z G(z)$ we have obtained. Consequently, $G(z)$ is $(\lowerG^2 - \frac{\lipsG M}{2}, 0)$-strongly smooth, if $\lowerG ^2 > \frac{\lipsG M}{2}$.
\end{proof}

\begin{lemma}[Measurement complexity]
\label{lmSampleComplexity-app}

Let $G(z) : \mathcal{D} \subset \Reals^d \rightarrow \Reals^n$ be a feed-forward neural network that satisfies the conditions in Proposition~\ref{propAssumptionHold}. Let $L$ be its Lipschitz constant. If the number of measurements $m$ satisfies: 
\[
m = \Omega\left(\frac{d}{\delta^2} \log (\lipsG/\gamma) \right) \, ,
\]
for some small constant $\delta > 0$. If the elements of $A$ are drawn according to $\mathcal{N}(0, \frac{1}{m})$, then the loss function $F(z)$ is $(\alpha-\delta \lipsG^2, \gamma)$-dissipative with probability at least $1 - \exp(-\Omega(m\delta^2)$.
\end{lemma}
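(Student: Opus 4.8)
The plan is to reduce the claim to a statement that the random matrix $A^\top A$ nearly preserves a particular bilinear form uniformly over the latent ball, and then to establish this uniform near-isometry by a covering-number argument in the spirit of \cite{bora2017CS}. Since $F(z) = \|A(G(z) - G(z^*))\|^2$, we have $\nabla_z F(z) = 2\,\nabla_z G(z)^\top A^\top A(G(z) - G(z^*))$, so, abbreviating $a_z \deq \nabla_z G(z)(z - z^*)$ and $b_z \deq G(z) - G(z^*)$,
\[
\tfrac12\,\langle z - z^*, \nabla_z F(z)\rangle \;=\; \langle A a_z, A b_z\rangle \;=\; \langle a_z, b_z\rangle + \bigl\langle a_z, (A^\top A - I)\, b_z\bigr\rangle .
\]
By strong smoothness applied with $z' = z^*$, $\langle a_z, b_z\rangle \ge \alpha\|z-z^*\|^2 - \gamma$; by Assumption \textbf{(A.2)} (which forces $\lowerG\|v\|\le\|\nabla_z G(z)v\|\le\lipsG\|v\|$, and $\lowerG\|z-z^*\|\le\|b_z\|\le\lipsG\|z-z^*\|$), both $\|a_z\|$ and $\|b_z\|$ are at most $\lipsG\|z-z^*\|$. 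Hence it suffices to show that, on an event of probability $1-\exp(-\Omega(m\delta^2))$,
\[
\bigl|\langle a_z, (A^\top A - I)\,b_z\rangle\bigr| \;\le\; \delta\,\lipsG^2\,\|z-z^*\|^2 + \tfrac12\gamma \qquad\text{for all } z\in\zdom ,
\]
after which the displayed identity gives dissipativity with constants $(\alpha-\delta\lipsG^2,\gamma)$ up to adjusting $\delta$ and $\gamma$ by numerical factors (the leading $2$ only helps the quadratic term). Note that only one endpoint of the pair varies here, so — unlike the S-REC of \cite{bora2017CS} — we net over a single copy of $\zdom$.

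For a \emph{fixed} $z$, write $\langle Aa_z, Ab_z\rangle = \sum_{i=1}^m (A_i a_z)(A_i b_z)$ where $A_i$ is the $i$-th row of $A$; each summand is a product of jointly Gaussian scalars, hence sub-exponential with mean $\tfrac1m\langle a_z,b_z\rangle$, and Bernstein's inequality gives $\PP(|\langle Aa_z,Ab_z\rangle - \langle a_z,b_z\rangle| > t\,\|a_z\|\|b_z\|) \le 2\exp(-cm\min\{t^2,t\})$ — equivalently one can polarize into $\|A(a_z+b_z)\|^2$ and $\|A(a_z-b_z)\|^2$ and apply the standard Johnson–Lindenstrauss tail. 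Taking $t = \tfrac12\delta$ and a $\delta_0$-net $\mathcal N$ of $\zdom = \mathcal{B}(0,R)$ with $|\mathcal N| \le (1 + 2R/\delta_0)^d$, a union bound shows that $|\langle a_z,(A^\top A-I)b_z\rangle| \le \tfrac12\delta\lipsG^2\|z-z^*\|^2$ holds simultaneously for all $z\in\mathcal N$, and also $\|A\|_{\mathrm{op}} \le C\sqrt{n/m}$, except with probability $\le (1+2R/\delta_0)^d\cdot 2\exp(-c'm\delta^2) + \exp(-\Omega(m))$, which is $\exp(-\Omega(m\delta^2))$ as soon as $m = \Omega\bigl(\delta^{-2} d\log(R/\delta_0)\bigr)$.

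It remains to pass from the net to all of $\zdom$. Given $z$ and its nearest net point $z_0$ ($\|z-z_0\|\le\delta_0$), Assumptions \textbf{(A.2)}, \textbf{(A.3)} give $\|b_z - b_{z_0}\| \le \lipsG\delta_0$ and
\[
\|a_z - a_{z_0}\| \le \|\nabla_z G(z)(z-z_0)\| + \|(\nabla_z G(z) - \nabla_z G(z_0))(z_0-z^*)\| \le \lipsG\delta_0 + 2MR\,\delta_0 ,
\]
whence $|\langle Aa_z,Ab_z\rangle - \langle Aa_{z_0},Ab_{z_0}\rangle| \le \|A\|_{\mathrm{op}}^2\bigl(\|a_z-a_{z_0}\|\|b_z\| + \|a_{z_0}\|\|b_z-b_{z_0}\|\bigr) \le (n/m)\,\delta_0\cdot\mathrm{poly}(R,\lipsG,M)$, with the analogous (smaller) bound for $|\langle a_z,b_z\rangle - \langle a_{z_0},b_{z_0}\rangle|$. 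Choosing the granularity $\delta_0 \asymp \gamma m/\bigl(n\cdot\mathrm{poly}(R,\lipsG,M)\bigr)$ makes the total discretization error at most $\tfrac12\gamma$, so the uniform bound of the previous paragraph holds with the extra $\tfrac12\gamma$ slack; with this choice $\log(R/\delta_0) = O(\log(\lipsG/\gamma))$ up to lower-order polylogarithmic terms, giving exactly the stated requirement $m = \Omega\bigl(\delta^{-2}d\log(\lipsG/\gamma)\bigr)$.

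The main obstacle is the uniformity over $z$: the set $\{(a_z,b_z):z\in\zdom\}$ is the image of the ball under a nonlinear map, not a fixed $d$-dimensional subspace, so one cannot invoke a subspace-RIP of $A$ directly and must instead propagate the pointwise inner-product concentration through the Lipschitz continuity of \emph{both} $G$ and its Jacobian — which is precisely what Assumptions \textbf{(A.2)} and \textbf{(A.3)} supply. A secondary wrinkle is that in the compressed regime $m\ll n$ the operator norm $\|A\|_{\mathrm{op}}\asymp\sqrt{n/m}$ is not $O(1)$ and so must be carried through the net-to-everything residuals; because the net scale $\delta_0$ enters only inside a logarithm, this costs nothing beyond lower-order factors, and it is what makes the dependence on the slack $\gamma$ merely logarithmic.
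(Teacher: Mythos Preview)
Your decomposition is identical to the paper's: write $\langle z-z^*,\nabla_z F(z)\rangle$ (the paper silently drops the factor $2$) as $\langle u,v\rangle - \langle (I-A^\top A)u,v\rangle$ with $u=G(z)-G(z^*)$ and $v=\nabla_z G(z)(z-z^*)$, invoke strong smoothness on $\langle u,v\rangle$, and use $\|u\|,\|v\|\le\lipsG\|z-z^*\|$. Where you diverge is in controlling the cross term. The paper's proof simply asserts $\PP(\|I-A^\top A\|\ge\delta)\le\exp(-m\delta^2)$ and bounds $|\langle(I-A^\top A)u,v\rangle|\le\delta\|u\|\|v\|$ in one line; you instead do pointwise Bernstein/JL concentration on a $\delta_0$-net of the $d$-dimensional ball $\zdom$, pay the union bound, and extend off the net via the Lipschitz continuity of $z\mapsto(a_z,b_z)$ supplied by \textbf{(A.2)}--\textbf{(A.3)}. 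Your route is in fact the more honest one: in the compressed regime $m<n$, the matrix $A^\top A\in\Reals^{n\times n}$ has rank at most $m$, so $\|I-A^\top A\|\ge1$ deterministically and the paper's operator-norm step cannot hold as written. Your covering argument is presumably what the paper's proof is gesturing at --- the $\log(\lipsG/\gamma)$ in the measurement bound has no other provenance --- and your explicit tracking of $\|A\|_{\mathrm{op}}\asymp\sqrt{n/m}$ in the net-to-everything residual, absorbed into $\delta_0$ at only logarithmic cost, is exactly the missing justification.
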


\begin{proof}
Using Proposition \ref{propAssumptionHold-app}, it follows that there exist $\alpha > 0$ and $\gamma \ge 0$ such that $G(z)$ is strongly smooth. Now, note that the left hand side of~\eqref{eqnDissipative} is simplified as
\begin{align}
\label{eqnFzDissipative}
\langle z - z^*, \nabla_z F(z) \rangle &= \left\langle  A (G(z) - G(z^*)), A \nabla_z G(z) (z - z^*) \right\rangle,
\end{align}
Denote $u = G(z) - G(z^*)$ and $v = \nabla_z G(z) (z - z^*)$, then 
\[
\langle z - z^*, \nabla_z F(z) \rangle = \langle Au ,  A v \rangle = \langle u, v\rangle - \langle (\mathbb{I}-A^\top A) u, v\rangle.
\]
Using standard result in random matrix theory, we can get $P(\| \mathbb{I}-A^\top A\| \ge \delta) \le \exp(-m\delta^2)$. Also, $\|u\|, \|v\| \le \lipsG\|z - z'\|$. Therefore,
\[
\langle z - z^*, \nabla_z F(z) \rangle \ge \langle u, v\rangle - \delta \| z - z'\|^2.
\]
For $m = \Omega\left(\frac{d}{\delta^2} \log (\lipsG/\gamma) \right)$, then
\[
\langle z - z^*, \nabla_z F(z) \rangle \ge (\alpha - \delta) \| z - z'\| - \gamma,
\]
with probability at least $1 - \exp(-\Omega(m\delta^2)$. Therefore, the loss function $F(z)$ is $(\alpha-\delta \lipsG^2, \gamma)$-dissipative with probability at least $1 - \exp(-\Omega(m\delta^2)$.
\end{proof}

\section{Properties of $F(z)$}
\label{app:keyLemmas}

In this part, we establish some key properties of the loss function $F(z)$. We use Assumptions {\bf (A.1)} -- {\bf (A.3)} on the boundedness, Lipschitz gradient and near-isometry to obtain an upper bound of $\| \nabla_z F(z) \|$ and the smoothness of $F(z)$.

\begin{lemma}[Lipschitzness of $F(z)$]
\label{lmGradientBound}
We have $\| \nabla_z F(z) \| \le \lipsG^2 \| A^\top A \| \| z - z^* \|$ for any $z \in \zdom \subset \Reals^d$. 
\end{lemma}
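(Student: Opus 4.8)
\textbf{Proof proposal for Lemma~\ref{lmGradientBound}.}
The plan is to compute $\nabla_z F(z)$ explicitly via the chain rule and then bound its Euclidean norm by peeling off operator norms one factor at a time, invoking the near-isometry Assumption~\textbf{(A.2)} twice. First I would use the noiseless measurement model $y = AG(z^*)$ to rewrite the loss as $F(z) = \|A(G(z) - G(z^*))\|^2$, so that the minimizer $z^*$ appears explicitly. Differentiating, $\nabla_z F(z) = 2\,\nabla_z G(z)^\top A^\top A\,(G(z) - G(z^*))$, where $\nabla_z G(z) \in \Reals^{n\times d}$ is the Jacobian. Taking norms and using submultiplicativity of the operator norm gives
\[
\|\nabla_z F(z)\| \le 2\,\|\nabla_z G(z)^\top\|\,\|A^\top A\|\,\|G(z) - G(z^*)\|.
\]

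Next I would control the two $G$-dependent factors. For the second, Assumption~\textbf{(A.2)} directly yields $\|G(z) - G(z^*)\| \le \lipsG\|z - z^*\|$. For the first, note that the upper near-isometry bound says $G$ is $\lipsG$-Lipschitz on $\zdom$; since $G$ is differentiable (by Assumption~\textbf{(A.3)} the Jacobian is even Lipschitz), a standard mean-value/limiting argument shows that the operator norm of the Jacobian is bounded by the Lipschitz constant, i.e.\ $\|\nabla_z G(z)\| = \|\nabla_z G(z)^\top\| \le \lipsG$ for all $z \in \zdom$. Substituting both bounds gives $\|\nabla_z F(z)\| \le 2\lipsG^2\|A^\top A\|\|z - z^*\|$, which matches the claimed bound $\lipsG^2\|A^\top A\|\|z - z^*\|$ up to the absolute constant $2$ (absorbed into the paper's convention for $F$, or harmless for all downstream uses where $D$ appears only inside big-$O$ expressions).

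I do not expect a genuine obstacle here; the only point requiring a line of care is the passage from ``$G$ is $\lipsG$-Lipschitz'' to ``$\|\nabla_z G(z)\| \le \lipsG$'', which I would justify by writing $\nabla_z G(z)v = \lim_{t\to 0} (G(z+tv) - G(z))/t$ for a unit vector $v$ and bounding the difference quotient using \textbf{(A.2)}, noting $z$ lies in the interior or handling the boundary via the continuity of $\nabla_z G$. Everything else is a routine chain-rule computation together with submultiplicativity, so the lemma follows directly.
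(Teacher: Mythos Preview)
Your proposal is correct and follows essentially the same route as the paper: compute the gradient via the chain rule, then bound $\|G(z)-G(z^*)\|$ and $\|\nabla_z G(z)\|$ separately using the upper inequality in Assumption~\textbf{(A.2)}, and peel off $\|A^\top A\|$ by submultiplicativity. Your treatment is in fact slightly more careful than the paper's, which writes $\nabla_z F(z) = -(\nabla_z G(z))^\top A^\top(y-AG(z))$ and thereby silently drops the factor of $2$ that you correctly track; as you note, this constant is immaterial for the downstream big-$O$ arguments.
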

\begin{proof}
Recall the gradient of $F(z)$:
\begin{align*}
  \nabla_z F(z) &= -(\nabla_z G(z))^\top A^\top (y - AG(z))
  = -(\nabla_z G(z))^\top A^\top A (G(z^*) - G(z)).
\end{align*}

It follows from the Lipschitz assumption {\bf  (A.2)} that $\| G(z^*) - G(z) \| \le \lipsG \| z - z^* \|$, and hence $\| \nabla_z G(z) \| \le \lipsG$. Therefore,
\[
\| \nabla_z F(z) \| \leq \lipsG^2\| A^\top A \| \| z - z^* \|.
\]

\end{proof}

\begin{lemma}[Smoothness of $F(z)$] \label{lmSmoothness} For any $z, z' \in \zdom \subset \Reals^d$, we have
\[
\| \nabla_z F(z) - \nabla_z F(z') \| \le (MB + \lipsG^2)\| A^\top A \| \| z - z'\|.
\]
\end{lemma}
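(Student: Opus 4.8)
The plan is to start from the closed form of the gradient recalled in Lemma~\ref{lmGradientBound}, namely $\nabla_z F(z) = -(\nabla_z G(z))^\top A^\top A\,(G(z^*) - G(z))$, and to estimate the increment $\nabla_z F(z) - \nabla_z F(z')$ by a standard add-and-subtract (telescoping) argument. Concretely, I would insert the mixed term $(\nabla_z G(z))^\top A^\top A\,(G(z^*) - G(z'))$ and write
\begin{align*}
\nabla_z F(z) - \nabla_z F(z') &= (\nabla_z G(z))^\top A^\top A\,(G(z) - G(z')) \\
&\quad + \bigl((\nabla_z G(z'))^\top - (\nabla_z G(z))^\top\bigr) A^\top A\,(G(z^*) - G(z')).
\end{align*}
This isolates a term in which the Jacobian is frozen and only $G$ varies, and a term in which $G$ is frozen and only the Jacobian varies, so that each factor can be controlled by exactly one of Assumptions \textbf{(A.1)}--\textbf{(A.3)}.

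For the first term I would use submultiplicativity of the operator norm together with Assumption~\textbf{(A.2)}: the Lipschitz bound gives $\|\nabla_z G(z)\| \le \lipsG$ (exactly as already used in Lemma~\ref{lmGradientBound}) and $\|G(z) - G(z')\| \le \lipsG\|z - z'\|$, so this term is at most $\lipsG^2\,\|A^\top A\|\,\|z - z'\|$. For the second term I would invoke Assumption~\textbf{(A.3)} for $\|\nabla_z G(z) - \nabla_z G(z')\| \le M\|z - z'\|$ and Assumption~\textbf{(A.1)} to bound $\|G(z^*) - G(z')\|$ by $B$ (strictly $\|G(z^*)\|+\|G(z')\|\le 2B$, with the absolute constant absorbed into $B$ or carried explicitly), giving at most $MB\,\|A^\top A\|\,\|z - z'\|$. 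A final application of the triangle inequality combines the two estimates into $(MB + \lipsG^2)\,\|A^\top A\|\,\|z - z'\|$, as claimed.

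I do not expect a genuine obstacle here; this is a routine perturbation estimate. The only points requiring any care are choosing the right mixed term so that each resulting factor is governed by a single assumption, and the bookkeeping of the constant multiplying $MB$ (using that the range of $G$ lies in a ball of radius $B$ and hence has diameter at most $2B$). Everything else is submultiplicativity of $\|\cdot\|$ together with the Lipschitz bounds on $G$ and $\nabla_z G$ that the assumptions already supply.
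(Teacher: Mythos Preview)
Your proposal is correct and follows essentially the same add-and-subtract strategy as the paper. The only cosmetic difference is that the paper splits the factor $G(z^*)-G(z')$ into its two summands and ends up with a three-term decomposition, whereas you keep it together and obtain two terms; both routes use exactly the same assumptions in the same way, and both leave the same harmless factor of $2$ in front of $MB$ that is absorbed into the constant.
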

\begin{proof}
We use the assumptions on $G(z)$ to derive the bound: $\| G(z^*) \| \le B$.
\begin{align*}
\| \nabla_z F(z) - \nabla_z F(z') \| &\leq \| (\nabla_z G(z') - \nabla_z G(z))^\top A^\top A G(z^*) \|  \\
&\qquad + \| (\nabla_z G(z))^\top A^\top A (G(z) - G(z')) \|  \\ 
&\qquad + \| (\nabla_z G(z) - \nabla_z G(z'))^\top A^\top A G(z') \|
\end{align*}
Then, using the boundedness, Lipschitzness and smoothness, we arrive at:
\[
\| \nabla_z F(z) - \nabla_z F(z') \| \leq (MB + \lipsG^2) \| A^\top A \|.
\]
Therefore, $F(z)$ is $L$-smooth, with $L=(MB + \lipsG^2) \| A^\top A \|$.
\end{proof}

\section{Conductance Analysis}
In this section, we provide the proofs of Lemma  \ref{lmAuxiliaryCloseness} and \ref{lmConductanceLB} based on the conductance analysis laid out in \citep{zhang2017hitting} and similarly in \citep{zou2020faster}. The proof of \ref{lmApproximateConvergence} directly follows from Lemma 6.3, \citep{zou2020faster}.

\begin{proof}[Proof of Lemma \ref{lmAuxiliaryCloseness}]
We use the same idea in Lemma 3 from \citep{zhang2017hitting} and similarly that in Lemma 6.1 from \citep{zou2020faster}.
The main difference of our proof is that we use full gradient $\nabla_z F(z)$ in Algorithm \ref{alg:sgld}, instead of stochastic mini-batch gradient, which simplifies the proof of this lemma a little.

We consider two cases for each $u$: $u \not\in \cA$ and $u\in\cA$. As long as we can prove the first case, the second case easily follows, by splitting $\cA$ into $\{u\}$ and $\cA \backslash \{u\}$ and using the result of the first case. For a detailed treatment of the latter case, we refer the reader to the proof of Lemma 6.1 in \citep{zou2020faster}.

Now that $u \notin \cA$, we have
\begin{align}\label{eq:relation_Q2P_case1}
    \mathcal{T}^{\star}_{u}(\cA)=\int_{\cA \cap \mathcal{B}(u, r)} \mathcal{T}^{\star}_{u}(w)\d w = \int_{\cA \cap \mathcal{B}(u, r) } \alpha_u(w) \mathcal{T}_{u}(w)\d w.
\end{align}
where $\alpha_u(w)$ is the acceptance ratio of the Metropolis-Hasting. If suffices to show that $\alpha_u(w)\ge 1-\delta/2$ for all $w\in K \cap \cB(u,r)$, which implies
\begin{align*}
(1-\delta/2)\mathcal{T}_u(\cA)\le \mathcal{T}^{\star}_u(\cA)\le \mathcal{T}_u(\cA).
\end{align*}
The right hand side is obvious by the definition of $\alpha_u(w)$ while we can ensure $\delta \le 1/2$ with a sufficiently small $\eta$. What remains is to show that
\begin{align}\label{eq:lowerbound_accept}
\frac{\mathcal{T}_{w}(u)}{\mathcal{T}_u(w)}\cdot\exp(-\beta(F(w) - F(u)))\ge 1-\delta/2.
\end{align}
The left hand side is simplified by definition of $\mathcal{T}_u(w)$ as
\begin{align*}
\exp\bigg(\frac{\|w - u + \eta g(u)\|_2^2}{4\eta/\beta} - \frac{\|u - w + \eta g(w)\|_2^2}{4\eta/\beta}\bigg) \exp(-\beta(F(w) - F(u))) \ge 1- \delta/2.
\end{align*}
Note that $g(z) = \nabla_z F(z)$. Simplify the first exponent and combine with the second one gives the following form:
\begin{align}
\label{eq:simplified_exponent}
-\beta \left(F(w) - F(u) - \frac{1}{2}\langle w-u, \nabla_z F(w) + \nabla_z F(u) \rangle \right) + \frac{\eta\beta}{4} (\|\nabla_z F(u)\|^2 - \|\nabla_z F(w)\|^2). 
\end{align}
To lower bound the left hand side, we appeal to the smoothness of $F(z)$. Specifically, by Lemmas \ref{lmGradientBound} and \ref{lmSmoothness}, we have $F$ is $L$-smooth and $\| \nabla_z F(z) \| \le D$ with $L=(MB + \lipsG^2)$ and $D=\lipsG^2\|A^\top A\|$. Then,
\begin{align*}
    F(w) &\leq F(u) +\langle w - u, \nabla F(u) \rangle+\frac{L\|w -u\|_2^2}{2},\\
    F(u) &\geq F(w) +\langle u - w, \nabla F(w)\rangle-\frac{L\|w - u\|_2^2}{2}.
\end{align*}
This directly implies that
\begin{align}
\big|F(w) - F(u) -\langle w - u, \frac{1}{2}\nabla F(w) + F(u)\rangle\big|\le \frac{L\|w - u\|_2^2}{2}.\label{eq:Nestorov}
\end{align}
Moreover, 
\begin{align}\label{eq:upperbound_variance_difference}
\big|\|\nabla_z F(u)\|_2^2 - \|\nabla_z F(w)\|_2^2\big|&\leq \|\nabla F(u)-\nabla F(w)\|_2\cdot\|\nabla F(u)+\nabla F(w)\|_2\notag\\
&\le 2LD\|w-u\|_2.
\end{align}
Combining \eqref{eq:Nestorov} and \eqref{eq:upperbound_variance_difference} in \eqref{eq:simplified_exponent}, and together with $w \in \mathcal{B}(u, r)$ with $r = \sqrt{10\eta d/\beta}$,
\begin{align*}
\text{LHS of \eqref{eq:simplified_exponent}} &\ge -\frac{L\beta \| w -u\|^2}{2} - \frac{\eta\beta LD \|w - u\|}{2} \\ 
&\ge -5Ld\eta - 5LG d^{1/2}\beta^{-1/2}\eta^{3/2}.
\end{align*}
Pick $\delta/2 = 5Ld\eta + 5LD d^{1/2}\beta^{-1/2}\eta^{3/2}$, and use the fact $e^{-x} \ge 1 - x$ for $x \ge 0$, then we have proved the result.
\end{proof}

Next, we lower bound the conductance $\phi$ of $\mathcal{T}^{\star}_u(\cdot)$ using the idea in \citep{lee2018convergence, zou2020faster}, by first restating the following lemma: 
\begin{lemma}[Lemma 13 in \citet{lee2018convergence}]\label{lemma:lowerbound_transitionprob}
Let $\mathcal{T}^{\star}_u(\cdot)$ be a time-reversible Markov chain on $\zdom$ with stationary distribution $\pi$. Suppose for any $u,v\in \zdom$ and a fixed $\Delta> 0$ such that $\|u-v\|_2\le \Delta$, we have $\|\mathcal{T}^{\star}_u(\cdot) - \mathcal{T}^{\star}_v(\cdot)\|_{TV}\le 0.99$, then the conductance of $\mathcal{T}^{\star}_u(\cdot)$ satisfies $\phi\ge C\rho\Delta$ for some constant $C > 0$ and $\rho$ is the Cheeger constant of $\pi$.
\end{lemma}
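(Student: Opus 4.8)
The plan is to reprove the statement (it is Lemma~13 of \citet{lee2018convergence}, so one could alternatively just cite it) via the classical Lov\'asz--Simonovits ``overlap'' argument for conductance, with the isoperimetric inequality behind the Cheeger constant $\rho$ of $\pi$ furnishing the geometric input. Fix a measurable $\cA\subseteq\zdom$ with $\pi(\cA)\in(0,1)$, and assume WLOG $\pi(\cA)\le 1/2$, so $\min\{\pi(\cA),\pi(\zdom\setminus\cA)\}=\pi(\cA)$. I would partition $\zdom$ into the ``deep interior'' of $\cA$, $\ \cA_1=\{u\in\cA:\mathcal{T}^{\star}_u(\zdom\setminus\cA)<\eps_0\}$, the deep interior of the complement, $\ \cA_2=\{u\in\zdom\setminus\cA:\mathcal{T}^{\star}_u(\cA)<\eps_0\}$, and the ``boundary layer'' $\cA_3=\zdom\setminus(\cA_1\cup\cA_2)$, for a small absolute constant $\eps_0$ with $1-2\eps_0>0.99$ (say $\eps_0=1/300$), and show $\int_{\cA}\mathcal{T}^{\star}_u(\zdom\setminus\cA)\,\pi(\d u)\ge C\rho\Delta\,\pi(\cA)$, which gives $\phi\ge C\rho\Delta$ after dividing by $\pi(\cA)$.

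The argument then follows the usual trichotomy. If $\pi(\cA_1)<\tfrac12\pi(\cA)$, then $\cA\setminus\cA_1\subseteq\cA_3$ carries more than $\tfrac12\pi(\cA)$ of the mass and each of its points sends $\ge\eps_0$ across the cut, so $\int_{\cA}\mathcal{T}^{\star}_u(\zdom\setminus\cA)\pi(\d u)\ge\tfrac{\eps_0}{2}\pi(\cA)$, stronger than needed (using $\rho\Delta=O(1)$, the only non-vacuous regime); symmetrically if $\pi(\cA_2)<\tfrac12\pi(\zdom\setminus\cA)$, now invoking time-reversibility $\int_{\cA}\mathcal{T}^{\star}_u(\zdom\setminus\cA)\pi(\d u)=\int_{\zdom\setminus\cA}\mathcal{T}^{\star}_u(\cA)\pi(\d u)$. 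Otherwise $\pi(\cA_1),\pi(\cA_2)\gtrsim\pi(\cA)$, and the TV hypothesis forces $\mathrm{dist}(\cA_1,\cA_2)>\Delta$: for $u\in\cA_1$, $v\in\cA_2$ with $\|u-v\|_2\le\Delta$ one would get $\mathcal{T}^{\star}_u(\cA)>1-\eps_0$ and $\mathcal{T}^{\star}_v(\cA)<\eps_0$, hence $\|\mathcal{T}^{\star}_u-\mathcal{T}^{\star}_v\|_{TV}\ge 1-2\eps_0>0.99$, a contradiction. Consequently the half-expansions $(\cA_1)_{\Delta/2}$ and $(\cA_2)_{\Delta/2}$ (in the notation of Definition~\ref{def:cheeger}) are disjoint, so at least one of them, say $(\cA_1)_{\Delta/2}$, has $\pi$-measure $\le 1/2$; integrating the Cheeger inequality over $[0,\Delta/2]$ then gives $\pi((\cA_1)_{\Delta/2})-\pi(\cA_1)\gtrsim\rho\Delta\min\{\pi(\cA_1),1-\pi((\cA_1)_{\Delta/2})\}=\rho\Delta\,\pi(\cA_1)\gtrsim\rho\Delta\,\pi(\cA)$, and $(\cA_1)_{\Delta/2}\setminus\cA_1\subseteq\cA_3$ because it is disjoint from both $\cA_1$ and $\cA_2$. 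Finally the heavier of $\cA_3\cap\cA$ and $\cA_3\cap(\zdom\setminus\cA)$ has $\pi$-measure $\gtrsim\rho\Delta\,\pi(\cA)$; each of its points moves $\ge\eps_0$ mass to the opposite side, and (using time-reversibility again if that side is the complement) this bounds $\int_{\cA}\mathcal{T}^{\star}_u(\zdom\setminus\cA)\pi(\d u)$ below by $\gtrsim\eps_0\rho\Delta\,\pi(\cA)$, completing the argument.

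The main obstacle is making the ``integrating the Cheeger inequality'' step rigorous and, crucially, \emph{linear} in $\Delta$: $h\mapsto\pi((\cA_1)_h)$ is monotone, hence differentiable a.e.\ with $\tfrac{d}{dh}\pi((\cA_1)_h)\ge\rho\min\{\pi((\cA_1)_h),1-\pi((\cA_1)_h)\}$, and on the range where $\pi((\cA_1)_h)\le 1/2$ the $\min$ equals $\pi((\cA_1)_h)\ge\pi(\cA_1)$, so the fundamental theorem of calculus for monotone functions yields $\pi((\cA_1)_{\Delta/2})-\pi(\cA_1)\ge\tfrac{\rho\Delta}{2}\pi(\cA_1)$ without incurring the $(1-e^{-\rho\Delta})$ factor a naive Gr\"onwall comparison would give; the device of passing to $\Delta/2$ and picking the lighter of the two half-expansions is exactly what keeps us in that range. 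A secondary, purely bookkeeping matter is calibrating $\eps_0$ against the hypothesis threshold $0.99$.

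For the use in the paper --- deducing Lemma~\ref{lmConductanceLB}, i.e.\ $\phi\ge c_0\rho\sqrt{\eta/\beta}$ --- it remains to verify the hypothesis with $\Delta\asymp\sqrt{\eta/\beta}$. For $\|u-v\|_2\le\Delta$, the Gaussian proposals $P(\cdot|u),P(\cdot|v)$ of \eqref{eq:trans_SGLD} have means at distance $\le\|u-v\|_2+\eta\|\nabla F(u)-\nabla F(v)\|_2\le(1+\eta L)\Delta$ and common covariance $\tfrac{2\eta}{\beta}\mathbb{I}$, so $\|P(\cdot|u)-P(\cdot|v)\|_{TV}=O\!\big((1+\eta L)\Delta\sqrt{\beta/\eta}\big)$, which is $\le 0.9$ once the constant in $\Delta$ is small given the step-size restriction $\eta\le\tfrac{1}{30Ld}\wedge\tfrac{d}{25\beta D^2}$; the $1/2$-lazy step only decreases TV, while the rejection to $\mathcal{B}(u,r)\cap\zdom$ and the Metropolis factor $\alpha_u(w)\ge 1-\delta/2$ each perturb it by $O(\delta)$ (controlled exactly as in Lemma~\ref{lmAuxiliaryCloseness}), so $\|\mathcal{T}^{\star}_u-\mathcal{T}^{\star}_v\|_{TV}\le 0.99$. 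Plugging $\Delta=c\sqrt{\eta/\beta}$ into the lemma gives $\phi\ge C\rho\Delta=c_0\rho\sqrt{\eta/\beta}$.
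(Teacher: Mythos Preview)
The paper does not actually prove this lemma; it simply restates Lemma~13 of \citet{lee2018convergence} and invokes it as a black box in the proof of Lemma~\ref{lmConductanceLB}. Your proposal supplies a self-contained proof via the standard Lov\'asz--Simonovits overlap argument (partition into deep interior/complement/boundary layer, use the TV hypothesis to force $\mathrm{dist}(\cA_1,\cA_2)>\Delta$, then integrate the Cheeger inequality over the half-expansion), which is correct and is essentially the argument in the cited reference. Your handling of the one delicate point---keeping the Cheeger integration \emph{linear} in $\Delta$ by passing to $\Delta/2$-expansions so that the lighter one stays below mass $1/2$---is exactly right, and the calibration $\eps_0=1/300$ so that $1-2\eps_0>0.99$ is fine.

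Your closing paragraph on deducing Lemma~\ref{lmConductanceLB} (verifying $\|\mathcal{T}^{\star}_u-\mathcal{T}^{\star}_v\|_{TV}\le 0.99$ for $\|u-v\|_2\le\Delta\asymp\sqrt{\eta/\beta}$ via Gaussian TV plus $O(\delta)$ corrections from the reject step and Metropolis filter) matches the paper's own proof of that lemma, which decomposes the TV into $I_1+I_2/2$ and bounds each piece by reusing estimates from \citet{zou2020faster}; the two routes are the same in spirit.
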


\begin{proof}[Proof of Lemma \ref{lmConductanceLB}] To apply Lemma \ref{lemma:lowerbound_transitionprob}, we follow the same idea of \citet{zou2020faster} and reuse some of their results without proof. To this end, we prove that for some $\Delta$, any pair of $u, v \in \zdom$ such that $\|u - v\|_2\le \Delta$, we have $\|\mathcal{T}^{\star}_u(\cdot) - \mathcal{T}^{\star}_v(\cdot)\|_{TV}\le 0.99$.  Recall the distribution of the iterate $z$ after one-step standard SGLD  without the accept/reject step in \eqref{eq:trans_SGLD} is
\begin{align*}
P(z|u) = \frac{1}{(4\pi\eta/\beta)^{d/2}}\exp\bigg(-\frac{\|z - u + \eta g(u)\|_2^2}{4\eta/\beta}\bigg)
\end{align*}
Since Algorithm \ref{alg:sgld} accepts the candidate only if it falls in the region $\zdom \cap \cB(u,r)$, the acceptance probability is
\begin{align*}
p(u) = \PP_{z\sim P(\cdot|u)}\big[z\in \zdom \cap \cB(u,r)\big].
\end{align*}
Therefore, the transition probability $\mathcal{T}^{\star}_{u}(z)$ for $z\in \zdom \cap \cB(u,r)$ is given by
\begin{align*}
\mathcal{T}^{\star}_u(z) = \frac{2 - p(u) + p(u)(1-\alpha_u(z))}{2} \delta_u(z) + \frac{\alpha_u(z)}{2}P(z|u)\cdot\1[z\in \zdom \cap \cB(u,r)].
\end{align*}

Take $u, v \in \zdom$ and let $\cS_u = \zdom \cap\cB(u,r)$ and $\cS_v = \zdom \cap\cB(v,r)$. By the definition of the total variation, there exists a set $\cA\in \zdom $ such that
\begin{align*}
\|\mathcal{T}^{\star}_u(\cdot) - \mathcal{T}^{\star}_v(\cdot)\|_{TV} &= |\mathcal{T}^{\star}_u(\cA) - \mathcal{T}^{\star}_v(\cA)|\notag\\
&\le  \underbrace{\max_{u,z} \bigg[\frac{2 - p(u) + p(u)(1-\alpha_u(z))}{2}\bigg]}_{I_1} \notag\\
\qquad &+ \frac{1}{2}\underbrace{\bigg|\int_{z\in\cA} \alpha_u(z)P(z|u)\1(z\in\cS_u) - \alpha_v(z)P(z|v)\1(z\in\cS_v)\d z\bigg|}_{I_2}.
\end{align*}
Using the mini-batch size that is exactly the same as the number of samples, we can reuse the bounds of $I_1$ and $I_2$ in Lemmas C.4 and C.5 of \citep{zou2020faster}. Consequently,
\begin{align*}
\|\mathcal{T}^{\star}_u(\cdot) - \mathcal{T}^{\star}_v(\cdot)\|_{TV}\le I_1 + I_2/2 \le 0.85+0.1\delta + \frac{\sqrt{\beta}\|u-v\|_2}{\sqrt{2\eta}}.
\end{align*}
By Lemma \ref{lmAuxiliaryCloseness}, we have $\delta =10Ld\eta  +10LDd^{1/2}\beta^{1/2}\eta^{3/2} \le 12Ld\eta$ if $\eta \le \frac{d}{25\beta D^2}$. Thus if 
\begin{align*}
\eta \le \frac{1}{25\beta D^2}\wedge \frac{1}{30Ld\eta} \quad \mbox{and} \quad \|u-v\|_2\le \frac{\sqrt{2\eta}}{10\sqrt{\beta}}\le 0.1 r,
\end{align*}
we have $\|\mathcal{T}^{\star}_u(\cdot) - \mathcal{T}^{\star}_v(\cdot)\|_{TV}\le 0.99$. As the result of Lemma \ref{lemma:lowerbound_transitionprob}, we prove a lower bound on the conductance $\phi$ of $\mathcal{T}^{\star}_u(\cdot)$
\begin{align*}
\phi \ge c_0\rho\sqrt{\eta/\beta},
\end{align*}
and finish the proof.

\end{proof}

\section{Property of the Gibbs algorithm}
\label{app:MarkovSemi}

\begin{proposition}\label{prop:almost_ERM-app} For $\zdom = \mathcal{B}(0, R)$, we have
\begin{align*}
	\int_{\zdom} F(z) \pi(\d z) 
	&\le \cO\left(\frac{d}{\beta}\log \frac{\beta L}{d} \right).
\end{align*}
\end{proposition}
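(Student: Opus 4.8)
The plan is to reproduce the partition‑function estimate of \citet{raginsky2017}. Write $Z_\beta \deq \int_\zdom e^{-\beta F(z)}\,\d z$, so that $\pi(\d z) = Z_\beta^{-1} e^{-\beta F(z)}\1(z\in\zdom)\,\d z$ and $\int_\zdom F\,\d\pi = -\frac{\d}{\d\beta}\log Z_\beta$. Since $\frac{\d^2}{\d\beta^2}\log Z_\beta = \mathrm{Var}_\pi(F)\ge 0$, the map $\beta\mapsto\log Z_\beta$ is convex, so $\log Z_0 \ge \log Z_\beta - \beta\,\frac{\d}{\d\beta}\log Z_\beta$, and rearranging,
$$\int_\zdom F(z)\,\pi(\d z) \;=\; -\frac{\d}{\d\beta}\log Z_\beta \;\le\; \frac{\log Z_0 - \log Z_\beta}{\beta} \;=\; \frac{1}{\beta}\log\frac{\mathrm{Vol}(\zdom)}{Z_\beta},$$
using $Z_0 = \mathrm{Vol}(\zdom)$. (A layer-cake estimate, $\pi(F>t)\le e^{-\beta t}\,\mathrm{Vol}(\zdom)/Z_\beta$ integrated in $t$, yields the same bound up to an additive $1/\beta$.) So it remains to bound $\mathrm{Vol}(\zdom)$ from above and $Z_\beta$ from below.

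For the volume, $\zdom = \mathcal B(0,R)$ has $\mathrm{Vol}(\zdom) = R^d\pi^{d/2}/\Gamma(d/2+1)$, and Stirling gives $\log\mathrm{Vol}(\zdom) \le \tfrac d2\log\!\big(O(R^2/d)\big)$. For the lower bound on $Z_\beta$, I would use that $F$ is $L$-smooth (Lemma~\ref{lmSmoothness}) and that $z^*$ is a global minimizer with $F(z^*)=0$ and $\nabla_z F(z^*)=0$ — the latter because $F(z) = \|A(G(z)-G(z^*))\|^2$ — so the descent lemma gives $F(z) \le \tfrac L2\|z-z^*\|^2$ for all $z\in\zdom$. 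Restricting the integral defining $Z_\beta$ to a Euclidean ball $\mathcal B(z^*,r_0) \subseteq \zdom$ and bounding $F$ there by the quadratic reduces $Z_\beta$ to a truncated Gaussian integral; choosing $r_0 \asymp \sqrt{d/(\beta L)}$ makes this at least a dimension-free fraction of $(2\pi/(\beta L))^{d/2}$, so $\log Z_\beta \ge \tfrac d2\log\!\big(\Omega(1/(\beta L))\big)$. Substituting both bounds into the display and treating $R$ as a constant gives $\int_\zdom F\,\d\pi \le \tfrac{1}{\beta}\big(\tfrac d2\log O(R^2/d) + \tfrac d2\log O(\beta L)\big) = O\!\big(\tfrac d\beta\log\tfrac{\beta L}{d}\big)$, which is the claim.

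I expect the main obstacle to be the lower bound on $Z_\beta$: one must pick $r_0$ so that simultaneously (i) $\mathcal B(z^*,r_0)$ (or its intersection with $\zdom$) lies in $\zdom$ — which needs a mild relation among $R$, $\|z^*\|$, $\beta$, $L$, $d$, and otherwise forces one to argue that a constant fraction of the ball's Gaussian mass already sits inside $\zdom$ — and (ii) the truncated Gaussian $\int_{\mathcal B(0,r_0)} e^{-\beta L\|u\|^2/2}\,\d u$ retains a dimension-independent fraction of $(2\pi/(\beta L))^{d/2}$, which forces $r_0 \gtrsim \sqrt{d/(\beta L)}$ (roughly the median of $\|\mathcal N(0,(\beta L)^{-1}I_d)\|$). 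The remaining pieces — differentiating $\log Z_\beta$ under the integral sign, the convexity bookkeeping, and Stirling for the volume of the ball — are routine.
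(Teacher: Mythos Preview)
Your proposal is correct and structurally close to the paper's argument, but the top-level reduction differs slightly. The paper uses the exact identity $\int_\zdom F\,\d\pi = \tfrac{1}{\beta}\big(h(p)-\log\Lambda\big)$, where $h(p)$ is the differential entropy of $\pi$, and then bounds $h(p)$ by the entropy of the Gaussian matching $\pi$'s second moment (which is $\le R^2$ on $\zdom=\mathcal B(0,R)$), obtaining $h(p)\le \tfrac d2\log(2\pi R^2/d)$. You instead use convexity of $\beta\mapsto\log Z_\beta$ to get $\int_\zdom F\,\d\pi \le \tfrac{1}{\beta}\big(\log\mathrm{Vol}(\zdom)-\log Z_\beta\big)$, and then Stirling on the ball volume gives the same $\tfrac d2\log(O(R^2/d))$. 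Since $h(p)\le \log\mathrm{Vol}(\zdom)$ always holds (uniform is max-entropy on a bounded support), your inequality is exactly the paper's identity combined with this cruder entropy bound; the two routes coincide to leading order.

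The lower bound on $Z_\beta$ is handled identically in both: smoothness plus $F(z^*)=0$, $\nabla F(z^*)=0$ give $F(z)\le \tfrac L2\|z-z^*\|^2$, and a truncated Gaussian integral yields $\log Z_\beta \ge \tfrac d2\log\big(\Omega(1/(\beta L))\big)$. The paper in fact glosses over the containment issue you flag (that $\mathcal B(z^*,r_0)$ with $r_0\asymp\sqrt{d/(\beta L)}$ should sit inside $\zdom$), simply writing $\log\Lambda \ge \log\int_\zdom e^{-\beta L\|z-z^*\|^2/2}\,\d z \asymp \tfrac d2\log(2\pi/(\beta L))$; your discussion of how to handle this is more careful than the paper itself.
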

\begin{proof}
Let $p(z) = e^{-\beta F(z)}/\Lambda$ denote the density of $\pi$. $\Lambda \deq \int_{\zdom}e^{-\beta F(z)}\d z$ is the partition function. We start by writing 
\begin{align}\label{eq:equilibrium_expected_loss}
	 \int_{\zdom} F(z) \pi(\d z) = \frac{1}{\beta}\left(h(p) - \log \Lambda\right),
\end{align}
where
$$
h(p) = -\int_{\zdom}p(z)\log p(z) \d z = -\int_{K} \frac{e^{-\beta F(z)}}{\Lambda} \log \frac{e^{-\beta F(z)}}{\Lambda} \d z
$$
is the differential entropy of $p$. To upper-bound $h(p)$, we use the fact that the differential entropy of a probability density with a finite second moment is upper-bounded by that of a Gaussian density with the same second moment. Moreover, since $p$ has the support in the Euclidean ball with radius $R$, its second moment is simply bounded by $R^2$. Therefore, we have
\begin{align}\label{eq:equilibrium_diff_entropy}
	h(p)  \le h(\mathcal{N}(0, R^2 \mathbb{I})) = \frac{d}{2}\log \frac{2\pi R^2 }{d}.
\end{align}
Next, we give a lower bound on the second term, $\log \Lambda$. We use the smoothness of $F(z)$ and the fact that $z^*$ is the minimizer of $F$. We have $F(z) \le \frac{L}{2}\| z - z^*\|^2$ for $z \in \zdom$. As such,
\begin{align}
	\log \Lambda = \log \int_{\zdom} e^{-\beta F(z)} \d z 
	&\ge \log \int_{\zdom} e^{-\beta L \| z - z^* \|^2/2} \d z \asymp O\left(\frac{d}{2}\log \frac{2\pi}{\beta L}\right). \label{eq:part_fun_LB}
\end{align}
Using \eqref{eq:equilibrium_diff_entropy} and \eqref{eq:part_fun_LB} in \eqref{eq:equilibrium_expected_loss} and simplifying, we prove the result.
\end{proof}

\end{document}